\documentclass{article}

\PassOptionsToPackage{numbers}{natbib}

\usepackage[preprint]{neurips_2026}

\usepackage[utf8]{inputenc} % allow utf-8 input
\usepackage[T1]{fontenc}    % use 8-bit T1 fonts
\usepackage{hyperref}       % hyperlinks
\usepackage{url}            % simple URL typesetting
\usepackage{booktabs}       % professional-quality tables
\usepackage{amsfonts}       % blackboard math symbols
\usepackage{nicefrac}       % compact symbols for 1/2, etc.
\usepackage{microtype}      % microtypography
\usepackage{xcolor}         % colors
\usepackage{graphicx}
\usepackage{amsmath}
\usepackage{amsthm}
\usepackage{multirow}
\usepackage{booktabs}
\usepackage{algorithm}
\usepackage{algpseudocode}

\usepackage{tikz}
\usetikzlibrary{arrows.meta,positioning,calc}

\newtheorem{proposition}{Proposition}
\newtheorem{lemma}{Lemma}

\newtheorem{remark}{Remark}
\newtheorem{definition}{Definition}

\title{FlowADMM: Plug-and-play ADMM with Flow-based Renoise-Denoise Priors}

\author{%
  Hendrik Sommerhoff\\
  Computer Vision Group, University of Siegen\\
  Hölderlinstraße 3, 57076 Siegen, Germany\\
  \texttt{hendrik.sommerhoff@uni-siegen.de}
  \And
  Michael Moeller\\
  Computer Vision Group, University of Siegen\\
  Hölderlinstraße 3, 57076 Siegen, Germany\\
  \texttt{michael.moeller@uni-siegen.de}
}

\newcommand{\best}[1]{\textbf{#1}}
\newcommand{\second}[1]{\underline{#1}}
\DeclareMathOperator{\prox}{prox}
\DeclareMathOperator{\argmin}{arg min}

\begin{document}

\maketitle

\begin{abstract}
Plug-and-play (PnP) methods for solving inverse problems have recently achieved strong performance by leveraging denoising priors based on powerful generative diffusion and flow models. However, existing diffusion- and flow-based PnP methods typically rely on stochastic renoise-denoise operations, which complicate the analysis of their convergence behavior. In this work, we identify and formalize the deterministic renoise-denoise operator underlying flow-based plug-and-play methods. This perspective reveals that these methods implicitly define a deterministic operator given by the expectation of a denoiser over the latent noise distribution. Building on this insight, we propose \emph{FlowADMM}, a PnP algorithm that integrates the renoise-denoise operator into the classical alternating direction method of multiplier (ADMM) framework. We establish convergence guarantees for FlowADMM under weak Lipschitz conditions on the underlying flow network, and extend the analysis to non-stationary time schedules. Empirically, FlowADMM achieves state-of-the-art performance among flow-based PnP methods on a range of inverse problems, including denoising, deblurring, super-resolution, and inpainting, while requiring fewer data consistency evaluations than prior approaches. Code for FlowADMM is available at \url{https://github.com/hesom/FlowADMM}.
\end{abstract}

\section{Introduction}
Inverse problems such as denoising, deblurring, super-resolution, and inpainting are commonly formulated in the form
\begin{equation}
    y = Ax + \nu\quad\nu\sim\mathcal{N}(0, \sigma_y^2I_n),
\end{equation}
where the goal is to recover the true image $x\in\mathbb{R}^m$ from degraded measurements $y\in\mathbb{R}^n$ and a known (linear) forward operator $A: \mathbb{R}^m \to \mathbb{R}^n$.
These problems are often ill-posed.
While in recent years deep learning methods were very successful in approximating a direct mapping $\mathcal{G}(y) \approx x$, this approach has two problems: (i) Direct end-to-end approaches often have limited interpretability and theoretical guarantees required for safety-critical applications and (ii) training is costly and each inverse problem requires a separate network to be trained.

On the other hand classical variational approaches formulate this task as a minimization problem
\begin{equation}
    x^* = \argmin_x F_y(x) + R(x)
    \label{eq:variational}
\end{equation}
where $F_y$ is a data fidelity term, e.g.~$F_y(x) = \Vert Ax - y\rVert_2^2$, and $R$ is a regularization term encoding prior information about the solution. Common approaches to solve~\eqref{eq:variational} are first-order proximal algorithms~\cite{parikh2014proximal, boyd2010distributed}
such as forward-backward splitting (FBS) given by
\begin{equation}
    x_{k+1} = \prox_{\tau R} (x_k - \tau\nabla F_y(x_k)),
\end{equation}
where $\tau>0$ denotes the step size.

While choosing a data term $F_y$ is typically straight-forward, hand-crafted regularizers, such as the popular total variation (TV) regularization $R(x) = \Vert \nabla x\rVert_1$, often cannot perfectly capture the complex structure of the space of natural images.
Plug-and-Play (PnP) methods~\cite{heide2014flexisp, venkatakrishnan2013plug} combine the advantages of deep learning and variational approaches, and, for instance, replace the prox operator of hand-crafted regularizers with a (learned) denoiser $D$, yielding
\begin{equation}
    x_{k+1} = D(x_k - \tau\nabla F_y(x_k)).
\end{equation}
This substitution allows powerful pretrained denoisers to act as implicit image priors, enabling the same model to be applied across different inverse problems by changing only the forward operator $A$.

Early PnP methods rely on denoisers trained for additive Gaussian noise~\cite{romano2017little, meinhardt2017learning, rick2017one, zhang2017learning, hurault2021gradient}. More recent approaches instead leverage priors derived from generative models, such as GANs~\cite{duff2024regularising} and VAEs~\cite{gonzalez2019solving}.
Recently, generative diffusion models~\cite{daras2024survey, zhu2023denoising, graikos2022diffusion} and flow models~\cite{martin2025pnp, pourya2025flower, ben2024d} have received attention in the PnP literature. Diffusion and flow models provide a time-dependent family of denoisers $\{D_t\}_{t\in[0,1]}$ adapted to different noise levels, making them directly applicable to PnP.

A key challenge in integrating diffusion and flow denoisers into iterative reconstruction algorithms is that these models are trained to operate on samples distributed according to a specific noise level distribution $p_t$. During a reconstruction process, however, the iterates generally deviate from this distribution. Recent flow- and diffusion-based PnP methods therefore incorporate explicit renoising steps that perturb the current iterate toward the appropriate noise level before applying the denoiser, and potentially averaging this process over multiple noise samples (e.g.~\cite{martin2025pnp}).
Despite their strong empirical performance, existing renoise-denoise PnP methods are typically formulated as stochastic algorithms and are difficult to analyze theoretically.
In particular, the deterministic operator implicitly induced by averaging the denoiser output over the latent noise distribution has not been characterized, and as a consequence the convergence behavior of these methods remains poorly understood.

In this work, we identify the deterministic renoise-denoise operator underlying flow-based PnP methods and propose \emph{FlowADMM}, a PnP algorithm built on this operator. We derive convergence guarantees under weak Lipschitz conditions on the underlying flow network, including for non-stationary timestep schedules, and demonstrate strong performance on several inverse problems.

Our contributions are summarized as follows:
\begin{itemize}
    \item We identify and analyze the mean renoise-denoise operator underlying stochastic flow-based PnP methods.
    \item We propose FlowADMM, a PnP algorithm integrating the mean renoise-denoise operator into classical ADMM.
    \item We derive convergence criteria for FlowADMM with non-constant time schedules under weak Lipschitz assumptions on the underlying flow network.
    \item We propose a timestep-dependent Monte Carlo sampling strategy motivated by the coarse-to-fine behavior of the mean renoise-denoise operator.
    \item We demonstrate state-of-the-art performance on several inverse problems while requiring fewer data term evaluations than previous competitive stochastic averaging approaches.
\end{itemize}

\section{Preliminaries}
\subsection{Flow Matching}
Continuous generative flow models~\cite{chen2018neural} learn a transformation between a simple base distribution $p_0$ (e.g. standard Gaussian) and a target data distribution $p_1$. Given an initial sample $x_0 \sim p_0$, a sample $x_1 \sim p_1$ is obtained by following the flow $\phi_t(x_0)$ defined by the ordinary differential equation (ODE)
\begin{equation}
    \frac{\partial}{\partial t}\phi_t(x) = v_t^\theta(\phi_t(x)), \qquad \phi_0(x) = x
\end{equation}
where $v_t^\theta$ is a time-dependent velocity field parameterized by a neural network. We refer to $t \in [0,1]$ as "time" following the flow matching literature, although it does not represent physical time but rather parametrizes the interpolation between the noise distribution and the data distribution.

Flow matching~\cite{lipman2023flow} is a training technique that trains $v_t^\theta$ using regression to a target conditional velocity field given by a probability path between $p_0$ and $p_1$. A common choice, that we also assume throughout this paper, is linear interpolation, as e.g. used in rectified flow~\cite{liu2023flow} and optimal transport (OT) flow \cite{tong2024improving}, which defines a path conditioned on two endpoints ($x_0$ and $x_1$) by
\begin{equation}
    x_t = tx_1 + (1-t)x_0
\end{equation}
which defines intermediate distributions $p_t$ for $t\in[0, 1]$, resulting in the flow matching objective
\begin{equation}
    \min_\theta \mathbb{E}_{t \sim \mathcal{U}(0, 1), x_0 \sim p_0, x_1 \sim p_1}(\Vert v_t^\theta(x_t) - (x_1 - x_0)\rVert_2^2).
    \label{eq:flow-matching-objective}
\end{equation}
Under this construction, the trained velocity network induces a family of MMSE denoisers 
\begin{equation}
D_t(x_t) = \mathbb{E}[x_1\mid x_t] = x_t + (1-t)v_t^\theta(x_t).
\end{equation}

\subsection{Flow-based plug-and-play reconstruction}
Flow-based PnP methods apply the denoiser within an iterative reconstruction scheme while adapting the iterate to the corresponding noise level through an explicit renoising step. For example, PnP-Flow~\cite{martin2025pnp} performs the updates
\begin{align}
\label{eq:pnpFlow}
\begin{split}
    z_{k+1} &= x_{k} - \tau_k\nabla F_y(x_k)\\
    \tilde{z}_{k+1} &= t_kz_{k+1} + (1-t_k)\epsilon,\qquad \epsilon\sim p_0 = \mathcal{N}(0,1)\\
    x_{k+1} &= D_{t_k}(\tilde{z}_{k+1}),
    \end{split}
\end{align}
where $(t_k)_k$ is a time-step schedule controlling the noise level.
PnP-Flow additionally notes that the denoising step may be averaged over multiple noise realizations. However, this averaged mapping is introduced only as an algorithmic variant, and its intrinsic operator properties are not characterized. In particular, the existing convergence analysis is only carried out for the stochastic iteration and does not study the residual structure or Lipschitz behavior of the mean denoiser. In the next section, we make this averaged operator explicit and analyze it directly.

\section{FlowADMM}
\begin{figure}[t]
\centering
\begin{tikzpicture}[
    >=Stealth,
    font=\small,
    scale=0.91, transform shape,
    noise/.style={circle, fill=gray!60, inner sep=1.15pt},
    xt/.style={circle, fill=orange!85!black, inner sep=1.25pt},
    clean/.style={circle, fill=blue!65!black, inner sep=1.25pt},
    xpoint/.style={circle, fill=black, inner sep=2pt},
    meanpoint/.style={circle, fill=red!80!black, inner sep=2.1pt},
    interp/.style={dashed, gray!55},
    denoise/.style={->, thin, gray!60},
    meanarrow/.style={->, very thick, red!75!black},
    manifold/.style={very thick, blue!65!black},
    fine/.style={very thick, purple!75!black},
    lab/.style={font=\footnotesize, align=center}
]

% ============================================================
% LEFT PANEL: SMALL t
% ============================================================
\begin{scope}[shift={(-3.7,0)}]

\node[font=\bfseries] at (0,2.95) {small $t$: coarse correction};

% Data manifold with a prominent local fine-detail bump
\draw[manifold, domain=-2.8:2.8, samples=240, smooth]
    plot (\x,{
        -1.05
        + 0.13*sin(1.2*\x r)
        + 0.34*exp(-24*\x*\x)
    });

% Current iterate
\coordinate (xL) at (0,0.05);
\node[xpoint] at (xL) {};
\node[right=2pt of xL] {$x$};

% Small t: coarse denoising outputs miss the fine bump
\coordinate (dL1) at (-0.95,-1.18);
\coordinate (dL2) at (-0.50,-0.94);
\coordinate (dL3) at (0.10,-1.14);
\coordinate (dL4) at (0.55,-0.97);
\coordinate (dL5) at (0.90,-1.16);

% Mean is between denoised samples, not on one of them
\coordinate (meanL) at (0.05,-1.06);

% Mean output and update
\node[meanpoint] at (meanL) {};
%\node[right=2pt of meanL] {$\bar S_t(x)$};
\draw[meanarrow] (xL) -- (meanL);

% Pure Gaussian noise distribution
\coordinate (cnoiseL) at (0,2.05);
\draw[fill=gray!12, draw=gray!55, dashed] (cnoiseL) ellipse (1.35 and 0.55);

% Noise samples
\coordinate (eL1) at (-0.95,2.20);
\coordinate (eL2) at (-0.45,1.70);
\coordinate (eL3) at (0.10,2.45);
\coordinate (eL4) at (0.65,1.85);
\coordinate (eL5) at (1.00,2.20);

\foreach \e in {eL1,eL2,eL3,eL4,eL5}{
    \node[noise] at (\e) {};
}

\coordinate (xtL1) at ($(eL1)!0.35!(xL)$);
\coordinate (xtL2) at ($(eL2)!0.35!(xL)$);
\coordinate (xtL3) at ($(eL3)!0.35!(xL)$);
\coordinate (xtL4) at ($(eL4)!0.35!(xL)$);
\coordinate (xtL5) at ($(eL5)!0.35!(xL)$);

\foreach \e/\xtt in {eL1/xtL1,eL2/xtL2,eL3/xtL3,eL4/xtL4,eL5/xtL5}{
    \draw[interp] (\e) -- (xL);
    \node[xt] at (\xtt) {};
}

\foreach \d in {dL1,dL2,dL3,dL4,dL5}{
    \node[clean] at (\d) {};
}

\foreach \xtt/\d in {xtL1/dL1,xtL2/dL2,xtL3/dL3,xtL4/dL4,xtL5/dL5}{
    \draw[denoise] (\xtt) -- (\d);
}

\end{scope}

\begin{scope}[shift={(3.7,0)}]

\node[font=\bfseries] at (0,2.95) {large $t$: fine correction};

\draw[manifold, domain=-2.8:2.8, samples=240, smooth]
    plot (\x,{
        -1.05
        + 0.13*sin(1.2*\x r)
        + 0.34*exp(-24*\x*\x)
    });

\draw[fine, domain=-0.42:0.42, samples=140, smooth]
    plot (\x,{
        -1.05
        + 0.13*sin(1.2*\x r)
        + 0.34*exp(-24*\x*\x)
    });

\coordinate (xR) at (0,0.05);
\node[xpoint] at (xR) {};
\node[right=2pt of xR] {$x$};

\coordinate (dR1) at (-0.32,-0.86);
\coordinate (dR2) at (-0.14,-0.76);
\coordinate (dR3) at (0.05,-0.73);
\coordinate (dR4) at (0.18,-0.77);
\coordinate (dR5) at (0.28,-0.83);

\coordinate (meanR) at (0.02,-0.75);

\node[meanpoint] at (meanR) {};
\draw[meanarrow] (xR) -- (meanR);

\coordinate (cnoiseR) at (0,2.05);
\draw[fill=gray!12, draw=gray!55, dashed] (cnoiseR) ellipse (1.35 and 0.55);

\coordinate (eR1) at (-0.95,2.20);
\coordinate (eR2) at (-0.45,1.70);
\coordinate (eR3) at (0.10,2.45);
\coordinate (eR4) at (0.65,1.85);
\coordinate (eR5) at (1.00,2.20);

\foreach \e in {eR1,eR2,eR3,eR4,eR5}{
    \node[noise] at (\e) {};
}

\coordinate (xtR1) at ($(eR1)!0.82!(xR)$);
\coordinate (xtR2) at ($(eR2)!0.82!(xR)$);
\coordinate (xtR3) at ($(eR3)!0.82!(xR)$);
\coordinate (xtR4) at ($(eR4)!0.82!(xR)$);
\coordinate (xtR5) at ($(eR5)!0.82!(xR)$);

\foreach \e/\xtt in {eR1/xtR1,eR2/xtR2,eR3/xtR3,eR4/xtR4,eR5/xtR5}{
    \draw[interp] (\e) -- (xR);
    \node[xt] at (\xtt) {};
}

\foreach \d in {dR1,dR2,dR3,dR4,dR5}{
    \node[clean] at (\d) {};
}

\foreach \xtt/\d in {xtR1/dR1,xtR2/dR2,xtR3/dR3,xtR4/dR4,xtR5/dR5}{
    \draw[denoise] (\xtt) -- (\d);
}

\end{scope}

\node[lab, blue!65!black] at (0,-1.50) {data manifold};

\begin{scope}[shift={(0,-2.35)}]
\node[draw=black!15, rounded corners, fill=white, inner sep=5pt] at (0,0) {
\begin{tikzpicture}[
    >=Stealth,
    font=\footnotesize,
    noise/.style={circle, fill=gray!60, inner sep=1.1pt},
    xt/.style={circle, fill=orange!85!black, inner sep=1.2pt},
    clean/.style={circle, fill=blue!65!black, inner sep=1.2pt},
    meanpoint/.style={circle, fill=red!80!black, inner sep=1.8pt},
    meanarrow/.style={->, very thick, red!75!black}
]
    \node[noise] at (0,0) {};
    \node[right=3pt] at (0,0) {noise $\epsilon_i$};

    \node[xt] at (1.85,0) {};
    \node[right=3pt] at (1.85,0) {renoised $x_t^{(i)}$};

    \node[clean] at (4.15,0) {};
    \node[right=3pt] at (4.15,0) {$D_t(x_t^{(i)})$};

    \node[meanpoint] at (6.35,0) {};
    \node[right=3pt] at (6.35,0) {$\bar S_t(x)$};

    \draw[meanarrow] (8.20,0) -- (8.75,0);
    \node[right=3pt] at (8.75,0) {$R_t(x)$};
\end{tikzpicture}
};
\end{scope}

\end{tikzpicture}
\caption{
Effect of the time parameter $t$ on the mean renoise-denoise operator.
For small $t$, interpolated points remain closer to noise and naturally spread out farther, producing diverse trajectories that recover coarse structure but average fine details.
For large $t$, interpolated points stay closer to $x$, producing trajectories that resolve the fine detail and place the mean operator on the local bump.
}
\label{fig:mean-renoise-denoise-t}
\end{figure}
\subsection{Mean renoise-denoise operator}
We now formalize the deterministic operator implicitly underlying sstochastic renoise-denoise plug-and-play methods.
Assume that we have a pretrained flow velocity network $v^\theta: [0,1] \times\mathbb{R}^m\rightarrow\mathbb{R}^m$ defining a time-dependent MMSE denoiser
\begin{equation}
    D_t = I + (1-t)v^\theta_t.
\end{equation}
Existing flow-based PnP methods apply the denoiser after explicitly renoising the current iterate with a random noise sample. Averaging over the latent noise distribution naturally defines the following deterministic operator, which we call the \emph{mean renoise-denoise operator}, as
\begin{equation}
    \bar{S}_t(x) = \mathbb{E}_{\epsilon\sim\mathcal{N}(0,I)}[D_t(tx + (1-t)\epsilon)]
\end{equation}
with timestep $t\in[0,1]$.
This operator represents the expected output of a renoise-denoise step and can be viewed as the deterministic limit of stochastic updates used by methods such as PnP-Flow~\cite{martin2025pnp} or Flower~\cite{pourya2025flower}. Figure~\ref{fig:mean-renoise-denoise-t} visualizes the action of the operator. The following remark shows that, under the ideal MMSE assumption, the operator leaves points on the data manifold unchanged, which implies that the data manifold is a fixed-point set of the operator.
\begin{remark}
    Assume that $D_t$ is the ideal MMSE denoiser for the flow path $x_t = tx_1 + (1-t)x_0$, where $x_0 \sim p_0$ and $x_1\sim p_1$, meaning that $D_t(x_t) = \mathbb{E}_{x_1}[x_1 \mid x_t]$.
    Then $\bar S_t(x) = x$ for $x~\sim p_1$.
    \label{remark:projection?}
\end{remark}

Our upcoming analysis involves the residual form of the mean renoise-denoise operator given by
\begin{equation}
    R_t(x) = \bar S_t(x) - x
\end{equation}
where $R_t$ represents the average correction induced by the denoiser at noise level $t$. The residual thus defines a vector field, which can be interpreted as a local correction toward the data manifold.

\subsection{Integration into ADMM}
We integrate the mean renoise-denoise operator into ADMM.
Compared to forward-backward splitting as used in \eqref{eq:pnpFlow}, ADMM allows multiple prior operators through auxiliary variables, can be extended to non-differentiable data terms, and does not contain an operator-dependend step size restriction~\cite{boyd2010distributed}.
The classical ADMM algorithm aims to solve \eqref{eq:variational} by restating it as the constrained problem
\begin{equation}
    \min_{x,z} F_y(x) + R(z)\quad \text{s.t.} \quad x = z,
\end{equation}
and solving it by minimizing the augmented Lagrangian
\begin{equation}
    \mathcal{L}_\rho(x, z, u) = F_y(x) + R(z) + \langle u, x - z \rangle + \frac{\rho}{2}\Vert x - z\rVert_2^2,
\end{equation}
where $\rho > 0$ is a penalty parameter and $u$ is the scaled dual variable.
The Lagrangian is minimized with alternating steps on $x$ and $z$. This leads to the following proximal updates
\begin{align}
    x_{k+1} &= \prox_{\tau F_y}(z_k - u_k)\\
    z_{k+1} &= \prox_{\tau R}(x_{k+1} + u_k)\\
    u_{k+1} &= u_k + x_{k+1} - z_{k+1}.
\end{align}
where $\tau = 1/\rho$ and the proximal operator is defined as
\begin{equation}
    \prox_f(v) = \argmin_x \frac{1}{2} \Vert x - v\rVert_2^2 + f(x).
\end{equation}
In the spirit of previous plug-and-play methods~\cite{ryu2019plug, meinhardt2017learning}, we propose to replace the proximal operator of the regularization function with the implicit prior given by the mean renoise-denoise operator, defining the algorithmic scheme, which we call FlowADMM
\begin{align}
    x_{k+1} &= \prox_{\tau F_y}(z_k - u_k)\\
    z_{k+1} &= \bar{S}_{t_k}(x_{k+1} + u_k)\\
    u_{k+1} &= u_k + x_{k+1} - z_{k+1},
\end{align}
where $(t_k)_k$ defines a time-schedule controlling the strength of the denoiser.

\subsection{Convergence analysis}
We now analyze the stability of the mean renoise-denoise operator, which will be used to develop convergence criteria for FlowADMM. 
Proofs in this section are given in Appendix~\ref{sec:proof-of-prop1}. 
\begin{lemma}
    \label{lemma:residual_lipschitz}
    Let the flow network $v_t^\theta$ be Lipschitz with constant $L_v(t)$.
    Then the residual operator $R_t = (\bar{S}_t - I)$ is Lipschitz continuous with constant $(1-t)(1+tL_v(t))$.
\end{lemma}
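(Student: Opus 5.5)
Write out the residual explicitly. With $D_t = I + (1-t)v_t^\theta$ and the renoised point $y_\epsilon(x) := tx + (1-t)\epsilon$,
\[
R_t(x) = \mathbb{E}_\epsilon\bigl[tx + (1-t)\epsilon + (1-t)v_t^\theta(tx+(1-t)\epsilon)\bigr] - x .
\]
Since $\mathbb{E}[\epsilon]=0$, this reduces to
\[
R_t(x) = -(1-t)x + (1-t)\,\mathbb{E}_\epsilon\bigl[v_t^\theta(tx+(1-t)\epsilon)\bigr].
\]
This representation is valid as soon as the expectation is finite. That holds because $v_t^\theta$ is Lipschitz, hence of at most linear growth, and the Gaussian has finite first moment.

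Given $x,x'$, subtract the two representations and apply the triangle inequality:
\[
\|R_t(x)-R_t(x')\| \le (1-t)\|x-x'\| + (1-t)\,\Bigl\|\mathbb{E}_\epsilon\bigl[v_t^\theta(y_\epsilon(x)) - v_t^\theta(y_\epsilon(x'))\bigr]\Bigr\|.
\]
For the second term, move the norm inside the expectation (Jensen, or simply the triangle inequality for Bochner integrals). The key point is that the noise sample $\epsilon$ is shared between the two renoised points, so $y_\epsilon(x)-y_\epsilon(x') = t(x-x')$ for every $\epsilon$. The Lipschitz property of $v_t^\theta$ then gives the pointwise bound $L_v(t)\,t\|x-x'\|$, which does not depend on $\epsilon$. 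Taking the expectation leaves the bound unchanged.

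Combining the two terms yields
\[
\|R_t(x)-R_t(x')\| \le (1-t)\bigl(1+tL_v(t)\bigr)\|x-x'\|,
\]
which is the claimed constant. There is no real obstacle. The only points needing care are the well-definedness and integrability of the expectation (handled by linear growth), and using a common $\epsilon$ coupling rather than independent noise, which is automatic since $\bar S_t$ is defined as an expectation over a single $\epsilon$. The endpoint $t=1$ is trivially consistent: there $R_1 \equiv 0$ and the bound is $0$.
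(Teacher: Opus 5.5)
Your proof is correct and takes essentially the same route as the paper. Both split $R_t(x)-R_t(x')$ into $-(1-t)(x-x')$ plus an expectation of increments of $D_t - I = (1-t)v_t^\theta$ at renoised points sharing the same $\epsilon$, then apply Jensen/triangle inequality with $x_\epsilon - x'_\epsilon = t(x-x')$; the paper first proves the bound $tL + (1-t)$ for a generic Lipschitz constant $L$ of $D_t - I$ and then substitutes $L=(1-t)L_v(t)$, whereas you substitute $D_t$ explicitly up front and add the integrability remark the paper omits.
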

This result shows that the strength of the operator decreases as $t \to 1$, where $\bar S_t$ approaches the identity.
A convergence analysis of FlowADMM is delicate, since the mean denoise-renoise operators changes between iterations through the time schedule $(t_k)_k$.
Our analysis involves the notion of $\alpha$-averaged operators, which are defined as follows.
\begin{definition}
    A operator $T: \mathbb{R}^d \to \mathbb{R}^d$ is called, $\alpha$-averaged (or just averaged), if there exists $\alpha\in(0,1)$ and a non-expansive operator $N: \mathbb{R}^d \to \mathbb{R}^d$ such that
    \begin{equation}
        T = \alpha I + (1-\alpha) N,
    \end{equation}
    where $I$ is the identity operator.
\end{definition}
If an averaged operator has a fixed point, then the fixed point iteration $x_{k+1} = T(x_k)$ converges~\cite{monotone}. Moreover, $T$ has the same fixed point set as $N$.
We start the convergence analysis of FlowADMM with a proposition that gives convergence guarantees for a constant time schedule $t_k=t$, that is based on results of \citet{ryu2019plug}.
\begin{proposition}
    Let $t\in(0,1)$, let the flow network $v_t^\theta$ be Lipschitz continuous with constant $L_v(t) < \frac{1}{1-t}$ and assume $F_y$ is differentiable and $\mu$-strongly convex. Let $\xi = (1-t)(1+tL_v(t))$
    denote the Lipschitz constant of the residual operator $ R_t$.
    Then for data term penalty $\tau$ satisfying
    \begin{equation}
        \tau > \frac{\xi}{(1 + \xi - 2\xi^2)\mu},
    \end{equation}
    the operator $T_t$ representing the fixed-$t$ FlowADMM iteration, i.e. the operator mapping the FlowADMM state $(z_k, u_k)$ to $T_t(z_k, u_k) = (z_{k+1}, u_{k+1})$, is averaged. Thus, if a fixed point exists, fixed-t FlowADMM converges. 
    \label{proposition:convergence}
\end{proposition}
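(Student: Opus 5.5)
We follow the PnP-ADMM analysis of \citet{ryu2019plug}. That analysis shows that PnP-ADMM with a denoiser $D$ is equivalent to Douglas--Rachford splitting. More precisely, after the change of variables $w_k = z_k - u_k$ (equivalently, tracking $(z_k,u_k)$ through $w_k$), the iteration becomes
\begin{equation*}
    w_{k+1} = \tfrac12 I + \tfrac12 (2D - I)(2\prox_{\tau F_y} - I)\, w_k .
\end{equation*}
Since $(z_k,u_k)$ is recovered from $w_k$ by a Lipschitz map, convergence of $w_k$ transfers to the ADMM state. Hence it suffices to show that the composition $G = (2D-I)(2\prox_{\tau F_y}-I)$ is non-expansive, possibly with some margin. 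We then write the iteration operator as $\alpha I + (1-\alpha)N$, taking either $\alpha = 1/2$ with $N = G$, or a rescaled variant that absorbs a contraction factor of $G$.

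\textbf{Data-term reflection.} For $\mu$-strongly convex, differentiable $F_y$, the prox $\prox_{\tau F_y}$ is $\frac{1}{1+\tau\mu}$-Lipschitz and firmly nonexpansive. From this one obtains the standard bound (Ryu et al., Lemma) that the reflected resolvent $2\prox_{\tau F_y}-I$ is $\frac{1}{1+\tau\mu}$-Lipschitz. Strictly, one uses the cocoercive structure of $\prox$ of a strongly convex function to get this contraction of the reflection.

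\textbf{Denoiser reflection.} We have $D = \bar S_t = I + R_t$, and by Lemma~\ref{lemma:residual_lipschitz} the residual $R_t$ is $\xi$-Lipschitz. The assumption $L_v(t) < \frac1{1-t}$ forces $\xi < 1$, since $(1-t)(1+tL_v) < (1-t) + t = 1$. This is the same hypothesis as in \citet{ryu2019plug}, namely that $(D - I)$ is $\xi$-Lipschitz with $\xi<1$. Their key lemma then shows that $2D - I$ is the reflected resolvent of a maximal monotone operator perturbed by a strongly monotone part. In particular, $\frac{1}{1+\tau\mu}$-contraction on the data side together with the $\xi$-control on the denoiser side yields that $G$ is Lipschitz with constant
\begin{equation*}
    \frac{1 + \xi + \xi\tau\mu + 2\xi^2\tau\mu}{1 + \tau\mu + 2\xi\tau\mu}
\end{equation*}
(Ryu et al., Theorem 2). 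This constant is $<1$ exactly when $\tau\mu(1+\xi-2\xi^2) > \xi$, which is the stated condition on $\tau$.

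\textbf{Conclusion.} With $G$ a contraction (hence non-expansive), the iteration operator $T_t = \tfrac12 I + \tfrac12 G$ is $\tfrac12$-averaged. Transporting this back to the $(z,u)$ variables gives the claim. The averaged operator property stated before the proposition then gives convergence whenever a fixed point exists; here it is even unique, since $T_t$ is a strict contraction.

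\textbf{Main obstacle.} The main work is the denoiser-side step: verifying that the hypothesis of Ryu et al.\ (``$D-I$ is $\xi$-Lipschitz'') is exactly what Lemma~\ref{lemma:residual_lipschitz} provides, and that $\bar S_t$ (an expectation over $\epsilon$) requires no further structure such as monotonicity. The remaining step is the bookkeeping of the variable change between the $(z,u)$ state and the Douglas--Rachford variable, which preserves averagedness only up to an affine isomorphism. We would handle this by stating averagedness for the DRS operator and noting that convergence transfers.
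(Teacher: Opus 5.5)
Your route matches the paper's. The residual Lipschitz lemma gives $\xi<1$, and everything else is imported from Ryu et al.'s PnP-ADMM/DRS analysis; the paper simply cites their Corollary 3. In unpacking that analysis, however, you state two intermediate facts that are false, and your conclusion rests on them.

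\textbf{The two false claims.} First, strong convexity does not make $2\prox_{\tau F_y}-I$ a $\frac{1}{1+\tau\mu}$-contraction. It makes $\prox_{\tau F_y}$ itself $\frac{1}{1+\tau\mu}$-Lipschitz, and cocoercivity only yields $\|(2\prox_{\tau F_y}-I)a-(2\prox_{\tau F_y}-I)b\|^2\le\|a-b\|^2-4\tau\mu\|\prox_{\tau F_y}a-\prox_{\tau F_y}b\|^2$, i.e.\ nonexpansiveness. For $F_y=\frac{\mu}{2}\|\cdot\|^2$ the reflection is $\frac{1-\tau\mu}{1+\tau\mu}I$, whose modulus tends to $1$.

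Second, the constant $\frac{1+\xi+\xi\tau\mu+2\xi^2\tau\mu}{1+\tau\mu+2\xi\tau\mu}$ bounds the full DRS operator $\frac12 I+\frac12 G$, not $G$. At $\xi=0$ it equals $\frac{1}{1+\tau\mu}$, which is the constant of $\prox_{\tau F_y}$, not of its reflection.

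In fact $G$ can be expansive under the proposition's hypotheses, because $2\bar S_t-I=I+2R_t$ may have Lipschitz constant up to $1+2\xi$. Take $t=0.9$ and $v_t^\theta(x)=9x$, so $\bar S_t=1.71\,I$ and $\xi=0.91$. Take $F_y=\frac{\mu}{2}\|\cdot\|^2$ with $\tau\mu=100$; the proposition's threshold here is about $3.59$. Then $G=2.42\cdot(-\tfrac{99}{101})\,I\approx-2.37\,I$. So the step ``$G$ is a contraction, hence $\frac12 I+\frac12 G$ is $\frac12$-averaged'' fails.

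The repair is to use the bound as actually stated. The DRS operator $T$ is itself a $c$-contraction with $c<1$. Any such $T$ is averaged: $N=\frac{2}{1+c}\bigl(T-\frac{1-c}{2}I\bigr)$ is nonexpansive and $T=\frac{1-c}{2}I+\frac{1+c}{2}N$. Banach's theorem then even supplies the fixed point.

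\textbf{The change of variables.} With $w_k=z_k-u_k$ you get $w_{k+1}=(2\bar S_t-I)(x_{k+1}+u_k)$, which still depends on $u_k$. It reduces to $\frac12 w_k+\frac12(2\bar S_t-I)(2\prox_{\tau F_y}-I)w_k$ only if $2\bar S_t-I$ is linear and invertible. Nor can $(z_k,u_k)$ be recovered from $w_k$ in general: for $v^\theta\equiv0$ and $t=\frac12$, $2\bar S_t-I=0$.

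For FlowADMM's ordering (data prox first), the closed recursion is in the denoiser input $\zeta_k=x_{k+1}+u_k$, namely $\zeta_{k+1}=\frac12\zeta_k+\frac12(2\prox_{\tau F_y}-I)(2\bar S_t-I)\zeta_k$. Here the reflections appear in the opposite order. Since $\bar S_t$ is nonlinear, you have to check that the contraction bound you cite covers this order; the paper's one-line citation skips this too. The state is then recovered continuously via $(z_{k+1},u_{k+1})=(\bar S_t\zeta_k,\ \zeta_k-\bar S_t\zeta_k)$, so convergence transfers.

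Your choice to assert averagedness for the DRS operator, rather than for the map $(z_k,u_k)\mapsto(z_{k+1},u_{k+1})$, is the right one. In the scalar example above, that map is the rank-one matrix $(1.71,-0.71)^\top(p,1-p)$ with $p=\frac{1}{101}$. Its norm is about $1.83$, so it is not even nonexpansive in the Euclidean norm.
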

Proposition~\ref{proposition:convergence} establishes conditions on the underlying flow network and the step size of the data term in the simplified setting of a fixed timestep schedule. The Lipschitz condition $L_v < \frac{1}{1-t}$ is substantially weaker compared to standard contraction conditions in other PnP methods, which require the residual denoising networks to have Lipschitz constants less than 1. This is especially prominent if $t$ becomes close to 1.

In practice, PnP methods based on flow and diffusion models achieve better results when using increasing time schedules (or, equivalently, decreasing noise schedules in the diffusion setting).
For large $t$, the renoising distribution remains concentrated near the current iterate $x$. Consequently, the operator acts as a local refinement step only when $x$ is already close to the data distribution. If $x$ is far from the data manifold, large-$t$ denoising remains off-distribution and does not necessarily move the iterate toward a plausible image. This motivates using smaller values of $t$ in early iterations to obtain coarse global corrections, followed by larger values of $t$ for refinements once the iterate is closer to the learned prior. Note that Prop.~\ref{proposition:convergence} already covers the convergence analysis for time schedules that become constant after finitely many iterations, i.e., $t_k = t_{\max}$ for all $k \geq K$. The following proposition is generalization for non-constant time schedules.
\begin{proposition}
    Let $(t_k)_k$ be a sequence in $[0, t_\text{max}]$ with $\lim_{k\to \infty} t_k = t_\text{max} < 1$ and
    \begin{equation}
        \sum_{k=1}^\infty \vert t_k - t_{\text{max}} \rvert < \infty.
    \end{equation}
    Let $T_{t_k}$ denote the FlowADMM iteration at timestep $t_k$ and let there be a $\delta> 0$ such that for all $\hat t\in [t_\text{max} - \delta, t_\text{max}]$ the iterations $T_{\hat t}$ satisfy the assumptions in Proposition~\ref{proposition:convergence}. Let the underlying flow network additionally be Lipschitz in $t$, i.e. for some positive constant $L_t$
    \begin{equation}
        \Vert v_s^\theta(x) - v_t^\theta(x)\rVert \leq L_t\vert s - t\rvert \qquad \forall x.
    \end{equation}
    Then FlowADMM converges to a fixed point of $T_* = T_{t_\text{max}}$, if such a fixed point exists.
    \label{proposition:varying-t-admm-convergence}
\end{proposition}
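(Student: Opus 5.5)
The plan is to treat FlowADMM with a varying schedule as an inexact fixed-point iteration of the single averaged operator $T_* = T_{t_\text{max}}$, and to show that the perturbations are summable. Once that is established, a standard result on inexact Krasnosel'skii--Mann iterations for averaged operators gives convergence to a fixed point of $T_*$: if $w_{k+1} = T_*(w_k) + e_k$ with $\sum_k \Vert e_k\rVert < \infty$, $T_*$ averaged and $\operatorname{Fix} T_* \neq \emptyset$, then $w_k \to w^* \in \operatorname{Fix} T_*$ (see \cite{monotone}).

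The iteration is written as $w_{k+1} = T_{t_k}(w_k) = T_*(w_k) + e_k$ with $e_k := T_{t_k}(w_k) - T_*(w_k)$. Because $t_k \to t_\text{max}$, there is a $K$ with $t_k \in [t_\text{max}-\delta, t_\text{max}]$ for all $k \geq K$. Discarding the first $K$ iterates changes only the initialization, so we may assume that every $T_{t_k}$ satisfies the hypotheses of Proposition~\ref{proposition:convergence}. In particular every $T_{t_k}$ and $T_*$ is averaged, and hence nonexpansive.

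\textbf{Bounding the perturbations.} We need $\Vert e_k\rVert \leq C\vert t_k - t_\text{max}\rvert$ with $C$ independent of $w_k$.

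\emph{Step 1: reduce to the prior step.} The $x$-update $x_{k+1} = \prox_{\tau F_y}(z_k - u_k)$ does not depend on $t$. Therefore both components of $T_{t_k}(w) - T_*(w)$, namely the $z$- and $u$-updates, differ exactly by $\bar S_{t_k}(v) - \bar S_{t_\text{max}}(v)$ evaluated at the same point $v = x_{k+1}+u_k$. It thus suffices to bound $\sup_v \Vert \bar S_s(v) - \bar S_t(v)\rVert \leq C\vert s-t\rvert$ for $s,t \in [t_\text{max}-\delta, t_\text{max}]$.

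\emph{Step 2: split the difference.} Write $D_s(sv+(1-s)\epsilon) - D_t(tv+(1-t)\epsilon)$ as the sum of three pieces:
\begin{itemize}
\item the identity part: $(s-t)(v-\epsilon)$;
\item the change in the prefactor: $(s-t)\,v^\theta_s(\cdot)$ terms;
\item the change in the network: $(1-t)[v^\theta_s(a) - v^\theta_t(b)]$, where $a = sv+(1-s)\epsilon$ and $b = tv+(1-t)\epsilon$.
\end{itemize}
For the last piece, use $\Vert v^\theta_s(a) - v^\theta_s(b)\rVert \leq L_v(s)\Vert a-b\rVert = L_v(s)\vert s-t\rvert\,\Vert v-\epsilon\rVert$ together with the time-Lipschitz assumption, which gives $\Vert v^\theta_s(b) - v^\theta_t(b)\rVert \leq L_t\vert s-t\rvert$.

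\emph{Step 3: the obstacle, uniformity in $v$.} Several terms carry a factor $\Vert v\rVert$ or $\Vert v^\theta\rVert$ and are not uniformly bounded.
\begin{itemize}
\item The $(s-t)v$ contributions should be arranged to cancel. Writing $D_t(x_t) = x_t + (1-t)v_t^\theta(x_t)$ with $x_t = tv+(1-t)\epsilon$, the expectation of the identity part is $tv$ (since $\mathbb{E}\epsilon = 0$). The $\Vert v\rVert$ dependence must therefore be absorbed, either by passing to the residual form $R_t$ (Lemma~\ref{lemma:residual_lipschitz}), or more plausibly by noting that we only need the bound along the trajectory.
\item So the fallback is to bound the iterates. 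The state sequence is bounded, because $\Vert w_{k+1} - w^*\rVert \leq \Vert w_k - w^*\rVert + \Vert e_k\rVert$ and $\Vert e_k\rVert \leq C(1 + \Vert w_k\rVert)\vert t_k - t_\text{max}\rvert$. Summability of $\vert t_k - t_\text{max}\rvert$ and a discrete Gr\"onwall argument then give $\sup_k \Vert w_k\rVert < \infty$.
\item The terms $\mathbb{E}\Vert\epsilon\rVert$ and $\Vert v_s^\theta(b)\rVert \leq \Vert v_s^\theta(0)\rVert + L_v\Vert b\rVert$ are then finite, uniformly for $s$ in the compact window, using continuity of $L_v$ or a uniform bound on it there.
\end{itemize}
This step is where most of the care is needed.

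\emph{Conclusion.} With $\Vert e_k\rVert \leq C'\vert t_k - t_\text{max}\rvert$ summable and $T_*$ averaged with a fixed point, the inexact Krasnosel'skii--Mann result yields $w_k \to w^* \in \operatorname{Fix} T_*$. Equivalently, one can argue directly: $(\Vert w_k - w^*\rVert)$ is quasi-Fejér, hence convergent. The averagedness inequality gives $\sum_k \Vert w_k - T_*w_k\rVert^2 < \infty$. Continuity of $T_*$ then identifies every cluster point as a fixed point, and Opial's argument gives convergence of the whole sequence.
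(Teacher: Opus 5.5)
Your proposal is correct and follows the paper's route. The paper writes the varying-$t$ iteration as an inexact Krasnosel'skii--Mann iteration of the averaged operator $T_*$ and invokes Bauschke--Combettes, Prop.~5.34. It notes that only the $\bar S_t$ step depends on $t$, splits $\bar S_t - \bar S_s$ using Lipschitz continuity in $x$ and in $t$, and uses boundedness of the iterates to make the perturbations summable. Your discrete Gr\"onwall argument (nonexpansiveness of $T_*$ at a fixed point plus $\Vert e_k\rVert \leq C(1+\Vert w_k\rVert)\vert t_k - t_\text{max}\rvert$) supplies two things the paper only asserts: the boundedness of the iterates, and control of the non-uniform $\Vert v_s^\theta\rVert$ growth term.
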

This result can be interpreted as a convergence guarantee for a non-stationary fixed point iteration whose operators asymptotically approach a limiting averaged operator.
Notably, the Lipschitz restriction with respect to $x$ on the flow network only applies in the neighborhood of $t_\text{max}$, which is close to $1$ in practice. There is no contraction bound restriction on the Lipschitz constant with respect to time $L_t$.
%Practical schedules that fulfill the summability condition are, for instance, constant schedules, schedules that become constant after a finite number of iterations, and geometric schedules.

Although in this paper we do not explicitly enforce the Lipschitz condition on the flow networks, we give experimental evidence for the stability of FlowADMM by analyzing the Lipschitz behavior of pretrained networks during late-stage FlowADMM iterations in Appendix~\ref{sec:experiments-lipschitz}.

\subsection{Practical implementation}
In practice, the expectation in the definition of the mean renoise-denoise operator $\bar S_t$ is intractable and must be approximated using Monte Carlo sampling.
Given $N$ i.i.d.\ samples $\{\epsilon_i\}_{i=1}^N$, $\epsilon_i\sim \mathcal{N}(0,I)$, we define the empirical operator as the unbiased estimator
\begin{equation}
    \hat S_t^{(N)}(x) = \frac{1}{N}\sum_{i=1}^N D_t(tx + (1-t)\epsilon_i).
\end{equation}
In FlowADMM, we replace $\bar{S}_{t_k}$ with its Monte Carlo counterpart $\hat S_{t_k}^{(N_k)}$, yielding a stochastic iteration.
We control the number of samples and thus the variance of the operator \emph{per iteration} through the schedule $N_k$. Motivated by the coarse-to-fine behavior of $\bar S_t$ (see Figure~\ref{fig:mean-renoise-denoise-t}), we employ a sampling strategy in which the number of samples increases in later iterations.

For small $t_k$, and when the current iterate is still far away from the manifold, the operator makes large, coarse corrections and is relatively insensitive to sampling noise, so a small number of samples suffices. In contrast, during late iteration refinements, an accurate estimation becomes more critical to refine small local structure and not kick the iterate off the manifold. Increasing $N_k$ in later iterations therefore reduces variance when precision matters most.
In our implementation, we fix the total budget of flow evaluations and distribute it across iterations by increasing $N_k$ in late iterations.
This allocates more computational resources to later iterations, where higher accuracy is required.

% \paragraph{Over-relaxation.}
% {\color{red} This will most likely be removed}
% We furthermore investigate over-relaxation of the ADMM $z$-updates~\cite{boyd2010distributed, eckstein1992douglas, eckstein1994parallel, eckstein1998operator}. Over-relaxation is a common heuristic to improve convergence speed in ADMM.
% Specifically, we replace the argument of the $z$-update with an affine combination
% \begin{equation}
%     \tilde{x}_{k+1} = \alpha_\text{relax} x_{k+1} + (1-\alpha_\text{relax}) z_k,
% \end{equation}
% with relaxation parameter $\alpha_\text{relax} \in (0,2)$, and perform
% \begin{equation}
%     z_{k+1} = \hat S_{t_k}^{(N_k)}(\tilde{x}_{k+1} + u_k).
% \end{equation}
% In our experiments, we use $\alpha \geq 1$, which marginally improves results for some tasks.

\section{Related Work}
\paragraph{Stochastic regularization.}
SNORE~\cite{renaud2024plug} is closely related to our work in that it also addresses the distribution mismatch of applying denoisers to PnP iterates outside of their training noise levels. SNORE introduces an explicit stochastic regularization function based on Gaussian perturbations of the current iterate, leading to a stochastic gradient descent algorithm. It also introduces an annealed variant, in which the noise levels are gradually reduced.

Our work differs in three main aspects. First, SNORE is formulated as optimization of an explicit scalar regularizer, whereas our prior is defined directly through a deterministic mean renoise-denoise operator and does not, in general, correspond to the gradient of a scalar energy.
Second, SNORE relies on additive Gaussian corruption with Gaussian denoisers, while our construction is tailored to the linear interpolation path used by straight-line flow matching.
Third, SNORE analyzes the convergence of their stochastic gradient descent method to critical points under boundedness assumptions, whereas our analysis studies fixed-point convergence of a non-stationary ADMM operator under Lipschitz conditions on the underlying flow network.
%Moreover, SNORE performs stochastic updates using individual noise samples at each iteration, whereas FlowADMM explicitly approximates the deterministic mean renoise-denoise operator with a timestep-dependent Monte Carlo sampling strategy.

\paragraph{Flow-based PnP methods.}
Recent plug-and-play methods increasingly leverage denoisers derived from generative flow matching models~\cite{martin2025pnp, pourya2025flower, ben2024d}. PnP-Flow~\cite{martin2025pnp} introduces a renoise-denoise iteration within a forward-backward splitting framework and additionally observes that averaging over multiple noise realizations can improve stability. Their convergence analysis, however, relies on strong assumptions on the boundedness of iterates and vanishing data-term step-sizes.
Flower~\cite{pourya2025flower} further improves reconstruction quality by using proximal steps on the data term with adaptive step size. Their best-performing variant averages multiple complete stochastic reconstruction trajectories.

In contrast to these approaches, our work explicitly formalizes and analyzes the deterministic operator induced by averaging over the latent noise distribution and integrates it directly into an ADMM framework. Moreover, we derive convergence guarantees for non-constant timestep schedules with Lipschitz conditions directly on  the underlying flow network.

Beyond plug-and-play approaches, D-Flow~\cite{ben2024d} formulates inverse problems through optimization in the latent source space of the flow model by differentiating through the generation dynamics.

\paragraph{Convergence analysis of PnP methods.}
The convergence analysis of PnP algorithms in prior work commonly relies on contraction or averagedness properties of the denoising operator~\cite{ryu2019plug, chan2016plug, hurault2022proximal}. While these provide strong guarantees, directly extending them to flow-based methods is challenging due to the stochastic renoise-denoise setting and the time-dependence of the denoiser. Our analysis builds on classical PnP theory while accounting for flow-based renoise-denoise operators and non-stationary timestep schedules. Moreover, we do not require the denoiser to be non-expansive.

\paragraph{Posterior sampling methods for inverse problems.}
A related line of work uses diffusion or flow models for approximate posterior sampling in inverse problems~\cite{kawar2022denoising, chung2023diffusion, kim2025flowdps, erbach2026solving}. These approaches aim to sample from the posterior distribution conditioned on the measurements and are typically motivated from a Bayesian perspective. In contrast, plug-and-play methods are generally optimization-oriented and aim to compute a single high-quality reconstruction rather than sampling from the full posterior distribution. Our work follows this deterministic PnP perspective and studies the resulting reconstruction dynamics through an operator-theoretic analysis.

\section{Numerical experiments}
\subsection{Experimental setup and baselines}
We evaluate the performance of FlowADMM on a range of inverse problems and compare it to recent state-of-the-art plug-and-play methods based on flow and diffusion models. We consider tasks including denoising, deblurring, super-resolution, random inpainting and box inpainting. Details on each task is provided in Appendix~\ref{sec:task-details}.

Unless stated otherwise, we follow the experimental setup of Flower~\cite{pourya2025flower} to ensure a fair comparison. Specifically, we use 100 images of size $128\times 128$ from the CelebA dataset~\cite{liu2015faceattributes} and 100 images from the AFHQ-Cat dataset~\cite{choi2020starganv2}. For all experiments, we use the same pretrained flow models provided by~\cite{martin2025pnp}, which is also used in the experiments in~\cite{pourya2025flower}.

In addition to PnP-Flow~\cite{martin2025pnp} and Flower~\cite{pourya2025flower} we also compare against several other recent methods, including the flow-based approaches OT-ODE~\cite{pokle2023training}, D-Flow~\cite{ben2024d} and Flow-Priors~\cite{zhang2024flow}, the diffusion-based method DiffPIR~\cite{zhu2023denoising} and PnP-GS with a Gaussian gradient-step denoiser~\cite{hurault2021gradient}. For these methods, we report the quantitative results provided in~\cite{pourya2025flower}, which were obtained under the same experimental setup as our work.

\subsection{Computational budget}
To maintain a fair comparison, we match the total number of iterations $K$ and flow evaluations to those used by Flower5-OT~\cite{pourya2025flower}, which is our main competitor. Flower5-OT averages 5 independent runs of the stochastic Flower algorithm, which leads to $5K$ flow evaluations. We want to stress that although Flower5-OT uses the same amount of flow evaluations as FlowADMM, it actually requires $5\times$ more data-term proximal operator evaluations, because it averages full reconstructions, while FlowADMM only averages the prior step within each iteration. While flow evaluations often dominate the computational cost, the data consistency step can be non-negligible depending on the forward operator, making this reduction practically relevant.

Since the number of iterations is finite, we employ a power law time schedule between $t_\text{min}$ and $t_\text{max}$ defined as
\begin{equation}
    t_k = t_\text{min} + \left( \frac{k+1}{K}\right)^{\gamma}(t_\text{max} - t_\text{min}),
\end{equation}
where $\gamma < 1$ uses more late time steps and $\gamma > 1$ keeps the iterates near $t_\text{min}$ for longer.
For the Monte Carlo approximation of the mean renoise-denoise operator, we considered both a constant and a piecewise-constant three-stage sampling schedule, which allocates an increasing amount of samples to later iterations. We treat the positions and height of the steps as hyperparameters while keeping the total number of samples across all iterations fixed. This allows us to concentrate more computational resources on steps where higher accuracy is required.

\subsection{Hyperparameter tuning}
The hyperparameters of FlowADMM, including the data term weight $\tau$ and the step-based sampling schedule $(N_k)_k$, are tuned separately for each task on a validation set of 32 images per dataset, following the tuning protocol of~\cite{martin2025pnp, pourya2025flower}.
We observed that mainly the data term strength $\tau$ varies greatly between tasks, while the other parameters show a more consistent behavior.
We also found that a late-heavy three-phase $N_k$ schedule consistently outperforms the constant schedule overall, validating our sampling hypothesis. 
An ablation (see Appendix~\ref{sec:late-heavy-pnp-flow}) further shows that transferring the same late-heavy Monte carlo sampling schedule idea to PnP-Flow improves all five CelebA tasks significantly, but still remains below FlowADMM, indicating that both the schedule and the ADMM formulation contribute to
the final gains. Other ablations and further details on the tuning and the final per-task parameter choices can be found in Appendix~\ref{sec:hyperparameter-details}.

\subsection{Results}
\label{sec:results}

\begin{table}[ht]
\setlength{\tabcolsep}{2.5pt}
\centering
\caption{Results for different inverse problems on the CelebA test set.}
\label{tab:celeba-flower-admm}
\scriptsize
\begin{tabular}{lccccccccccccccc}
\toprule
Method &
\multicolumn{3}{c}{Denoising} &
\multicolumn{3}{c}{Deblurring} &
\multicolumn{3}{c}{Super-resolution} &
\multicolumn{3}{c}{Random inpainting} &
\multicolumn{3}{c}{Box inpainting} \\
\cmidrule(lr){2-4}\cmidrule(lr){5-7}\cmidrule(lr){8-10}\cmidrule(lr){11-13}\cmidrule(lr){14-16}
& PSNR & SSIM & LPIPS & PSNR & SSIM & LPIPS & PSNR & SSIM & LPIPS & PSNR & SSIM & LPIPS & PSNR & SSIM & LPIPS \\
\midrule
Degraded & 20.00 & 0.348 & 0.372 & 27.83 & 0.740 & 0.126 & 10.26 & 0.183 & 0.827 & 11.95 & 0.196 & 1.041 & 22.27 & 0.742 & 0.214 \\
PnP-GS & 32.64 & 0.910 & 0.035 & 34.03 & 0.924 & 0.041 & 31.31 & 0.892 & 0.064 & 29.22 & 0.875 & 0.070 & -- & -- & -- \\
DiffPIR & 31.20 & 0.885 & 0.060 & 32.77 & 0.912 & 0.060 & 31.52 & 0.895 & 0.033 & 31.74 & 0.917 & 0.025 & -- & -- & -- \\
OT-ODE & 30.54 & 0.859 & \second{0.032} & 33.01 & 0.921 & \second{0.029} & 31.46 & 0.907 & \best{0.025} & 28.68 & 0.871 & 0.051 & 29.40 & 0.920 & 0.038 \\
D-Flow & 26.04 & 0.607 & 0.092 & 31.25 & 0.854 & 0.038 & 30.47 & 0.843 & 0.026 & 33.67 & 0.943 & \best{0.015} & 30.70 & 0.899 & 0.026 \\
Flow-Priors & 29.34 & 0.768 & 0.134 & 31.54 & 0.858 & 0.056 & 28.35 & 0.713 & 0.102 & 32.88 & 0.871 & 0.019 & 30.07 & 0.858 & 0.048 \\
PnP-Flow1 & 31.80 & 0.905 & 0.044 & 34.48 & 0.936 & 0.040 & 31.09 & 0.902 & 0.045 & 33.05 & 0.944 & \second{0.018} & 30.47 & 0.933 & 0.037 \\
Flower1-OT & 32.28 & 0.914 & 0.034 & 34.98 & 0.947 & \best{0.026} & 32.36 & 0.923 & 0.034 & 33.08 & 0.944 & \second{0.018} & 31.19 & 0.945 & \best{0.022} \\
PnP-Flow5 & 32.30 & 0.911 & 0.056 & 34.80 & 0.940 & 0.047 & 31.49 & 0.906 & 0.056 & \second{33.98} & \second{0.953} & 0.022 & 31.09 & 0.940 & 0.043 \\
Flower5-OT & \second{33.14} & \best{0.926} & 0.038 & \best{35.67} & \best{0.954} & 0.032 & \second{33.09} & \second{0.932} & 0.040 & 33.95 & \second{0.953} & 0.020 & \best{31.87} & \best{0.952} & \second{0.023} \\
\midrule
FlowADMM (ours)& \best{33.19} & \second{0.918} & \best{0.027} & \second{35.53} & \second{0.950} & 0.035 & \best{33.94} & \best{0.940} & \second{0.025} & \best{35.23} & \best{0.965} & 0.020 & \second{31.72} & \second{0.949} & 0.034 \\
\bottomrule
\end{tabular}
\end{table}

\begin{table}[ht]
\setlength{\tabcolsep}{2.5pt}
\centering
\caption{Results for different inverse problems on the AFHQ-Cat test set.}
\label{tab:afhq-flower-admm}
\scriptsize
\begin{tabular}{lccccccccccccccc}
\toprule
Method &
\multicolumn{3}{c}{Denoising} &
\multicolumn{3}{c}{Deblurring} &
\multicolumn{3}{c}{Super-resolution} &
\multicolumn{3}{c}{Random inpainting} &
\multicolumn{3}{c}{Box inpainting} \\
\cmidrule(lr){2-4}\cmidrule(lr){5-7}\cmidrule(lr){8-10}\cmidrule(lr){11-13}\cmidrule(lr){14-16}
& PSNR & SSIM & LPIPS & PSNR & SSIM & LPIPS & PSNR & SSIM & LPIPS & PSNR & SSIM & LPIPS & PSNR & SSIM & LPIPS \\
\midrule
Degraded & 20.00 & 0.314 & 0.509 & 23.94 & 0.517 & 0.444 & 11.70 & 0.208 & 0.873 & 13.36 & 0.223 & 1.081 & 21.80 & 0.740 & 0.198 \\
PnP-GS & \second{32.58} & \best{0.894} & \best{0.072} & 27.91 & 0.753 & 0.349 & 24.15 & 0.632 & 0.362 & 29.42 & 0.836 & 0.126 & -- & -- & -- \\
DiffPIR & 30.58 & 0.835 & 0.189 & 27.56 & 0.728 & 0.342 & 23.65 & 0.624 & 0.402 & 31.70 & 0.881 & 0.062 & -- & -- & -- \\
OT-ODE & 30.03 & 0.815 & \second{0.076} & 27.06 & 0.713 & \best{0.123} & 25.91 & 0.716 & \best{0.108} & 29.40 & 0.839 & 0.090 & 24.62 & 0.875 & 0.085 \\
D-Flow & 26.13 & 0.574 & 0.175 & 27.82 & 0.721 & \second{0.164} & 24.64 & 0.601 & 0.190 & 32.20 & 0.894 & \second{0.040} & 26.26 & 0.842 & 0.077 \\
Flow-Priors & 29.41 & 0.763 & 0.153 & 26.47 & 0.700 & 0.181 & 23.51 & 0.570 & 0.272 & 32.37 & 0.906 & 0.047 & 26.20 & 0.818 & 0.118 \\
PnP-Flow1 & 31.18 & 0.863 & 0.135 & 27.87 & 0.760 & 0.304 & 26.94 & \second{0.763} & \second{0.171} & 33.00 & 0.918 & \best{0.037} & 26.00 & 0.897 & 0.103 \\
Flower1-OT & 31.69 & 0.879 & 0.102 & 28.64 & 0.775 & 0.255 & 26.23 & 0.741 & 0.272 & 32.97 & 0.918 & \second{0.040} & 26.19 & \second{0.915} & \best{0.063} \\
PnP-Flow5 & 31.43 & 0.864 & 0.168 & 28.19 & 0.766 & 0.332 & \second{27.37} & \second{0.774} & 0.183 & \second{33.75} & \second{0.929} & 0.048 & 26.68 & 0.901 & 0.120 \\
Flower5-OT & 32.35 & \second{0.891} & 0.116 & \best{28.97} & \second{0.784} & 0.283 & 26.57 & 0.750 & 0.282 & 33.70 & 0.927 & 0.045 & \second{26.88} & \best{0.922} & \second{0.066} \\
\midrule
FlowADMM (ours) & \best{32.90} & 0.890 & 0.082 & \second{28.96} & \best{0.792} & 0.284 & \best{28.12} & \best{0.808} & 0.199 & \best{34.82} & \best{0.938} & 0.048 & \best{27.39} & 0.903 & 0.068 \\
\bottomrule
\end{tabular}
\end{table}

The quantitative evaluation in Table~\ref{tab:celeba-flower-admm} and Table~\ref{tab:afhq-flower-admm} show that FlowADMM matches or improves the best competing methods in distortion metrics. In particular, FlowADMM obtains best PSNR in 7 out of 10 tasks and the best SSIM on 6 out of 10 tasks, with especially large gains on for super-resolution and inpainting. FlowADMM performs worse on the LPIPS metric, which is consistent with the perception-distortion tradeoff~\cite{blau2018perception}, since we optimized for PSNR. Qualitative examples in Figure~\ref{fig:result-images} further show that FlowADMM preserves fine structures, such as the cat whiskers in the super-resolution example. We give additional results in the Appendix.

\newcommand{\imgw}{0.12\textwidth}
  \newcommand{\psnrcell}[2]{%
    \shortstack[c]{\scriptsize #1\\[-2pt]\includegraphics[width=\imgw]{#2}}%
  }
  \newcommand{\plainimg}[1]{%
    \shortstack[c]{\strut\\[-2pt]\includegraphics[width=\imgw]{#1}}%
  }
  \newcommand{\vrow}[2]{%
    \rotatebox{90}{\parbox{1.65cm}{\centering #1\\#2}}%
  }

  \begin{figure}[t]
  \centering
  \setlength{\tabcolsep}{1pt}
  \renewcommand{\arraystretch}{0.0}
  \scriptsize
  \begin{tabular}{cccccccc}
  \toprule
   & GT & Degraded & OT-ODE & Flow-Priors & PnP-Flow5 & Flower5-OT & FlowADMM \\
  \midrule
  \vrow{CelebA}{Denoising} &
  \plainimg{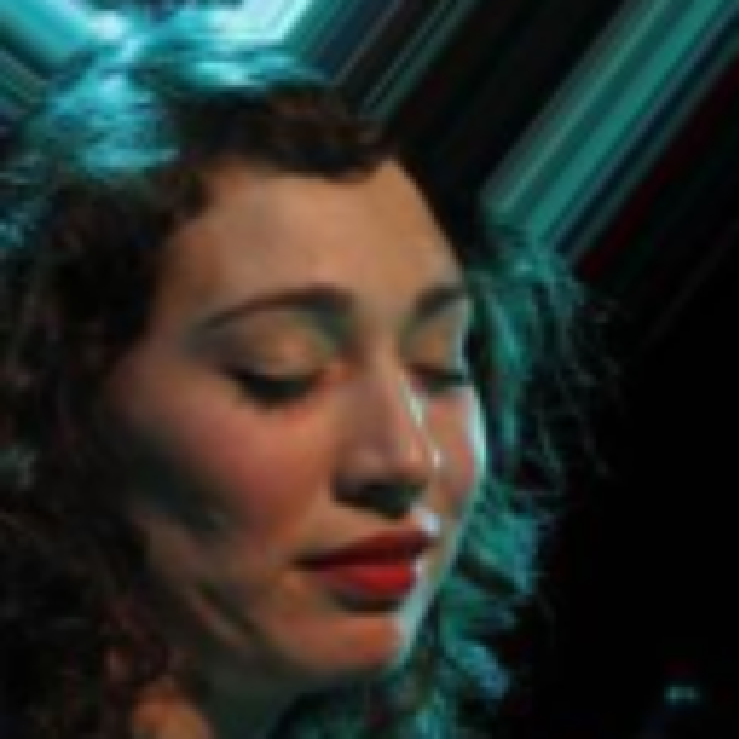} &
  \psnrcell{20.01}{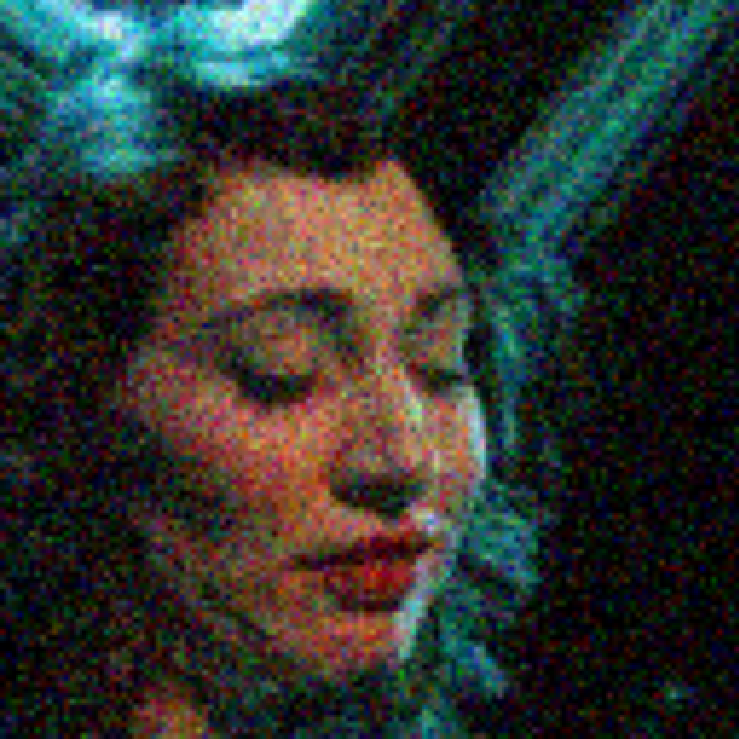} &
  \psnrcell{29.48}{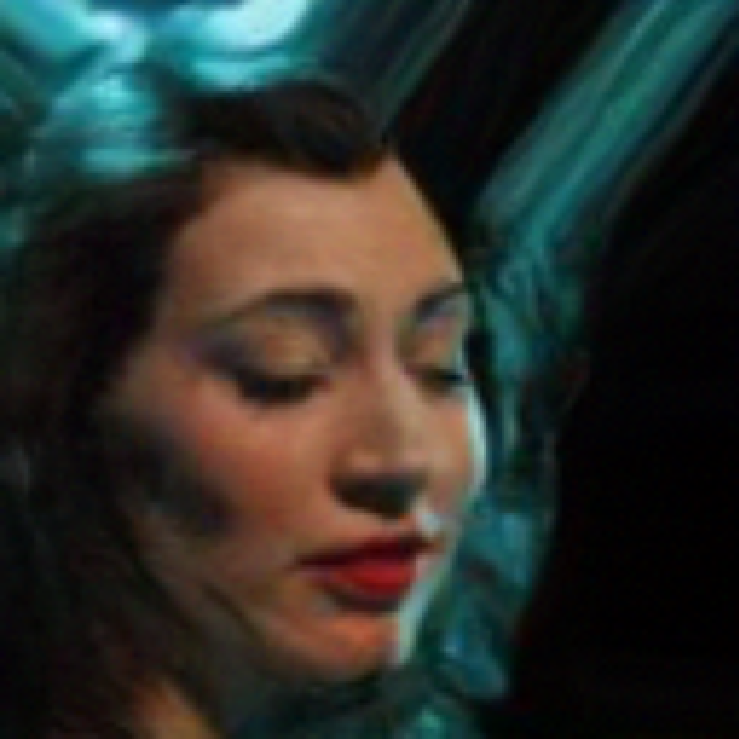} &
  \psnrcell{29.63}{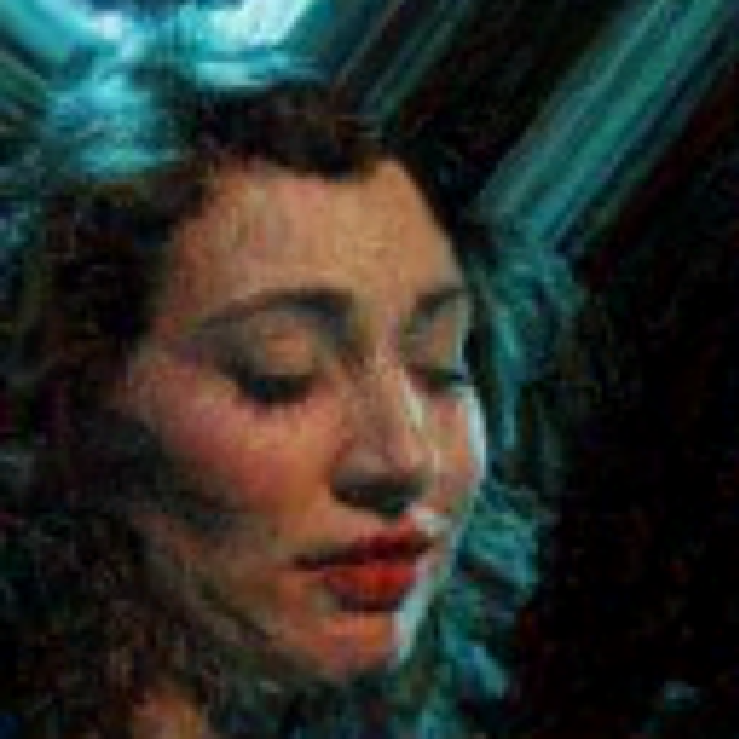} &
  \psnrcell{31.38}{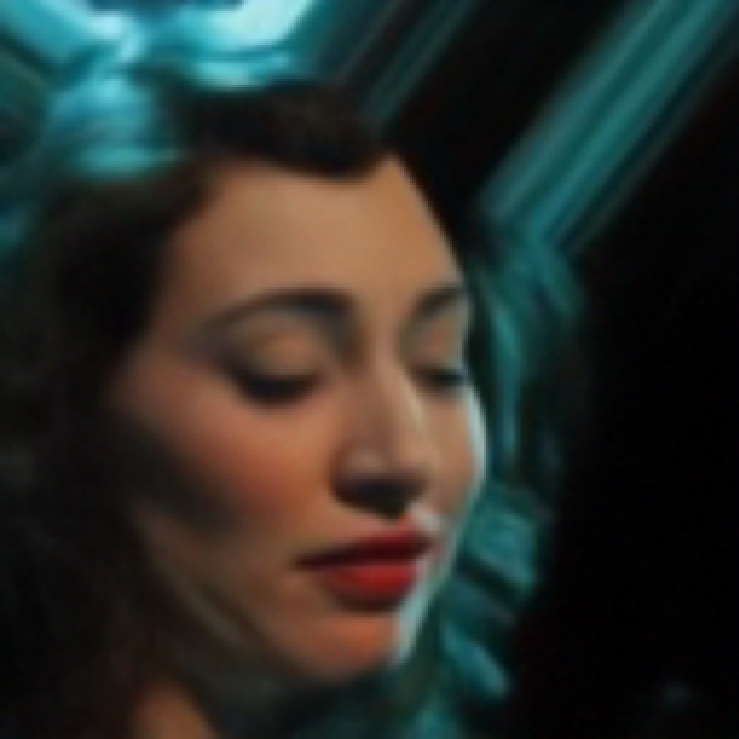} &
  \psnrcell{32.22}{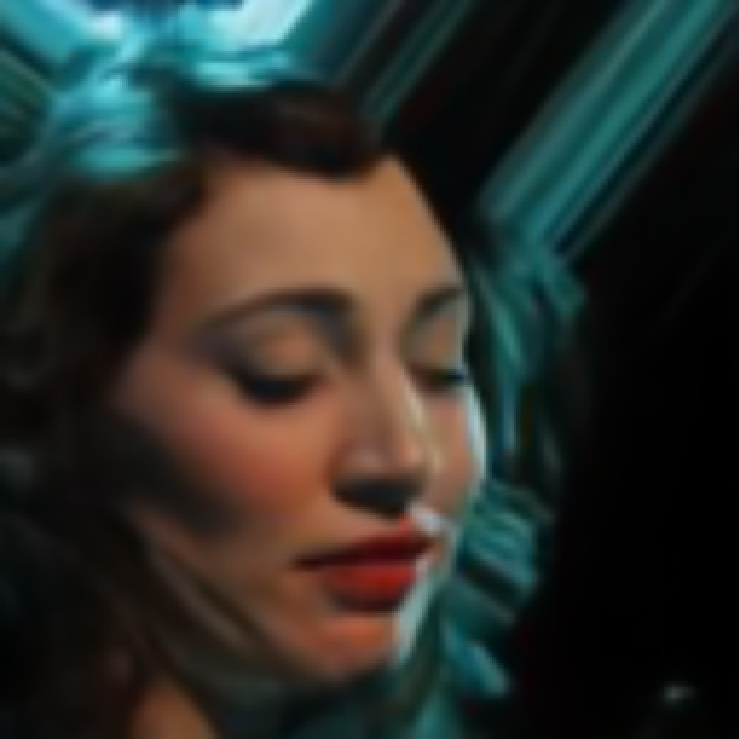} &
  \psnrcell{32.35}{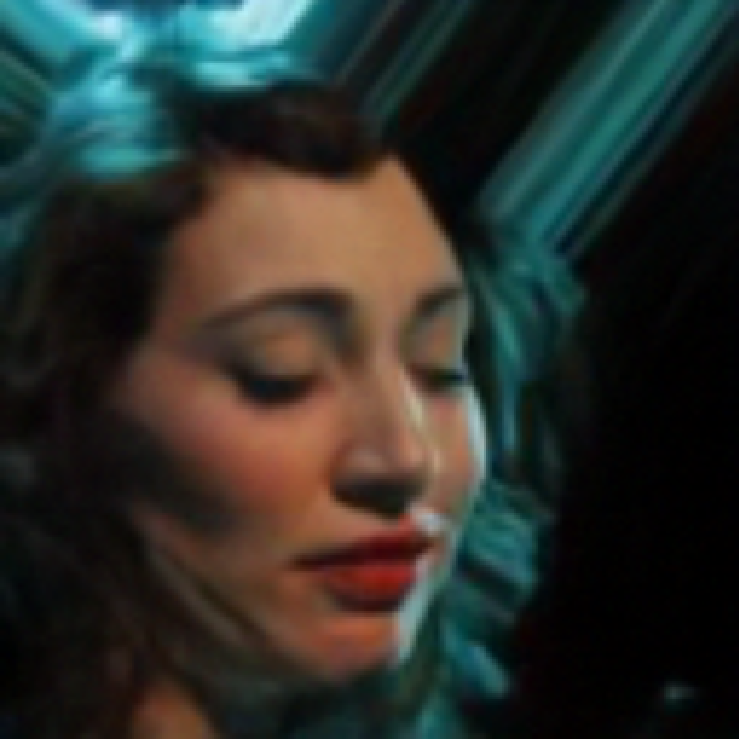} \\
  \vrow{CelebA}{Deblurring} &
  \plainimg{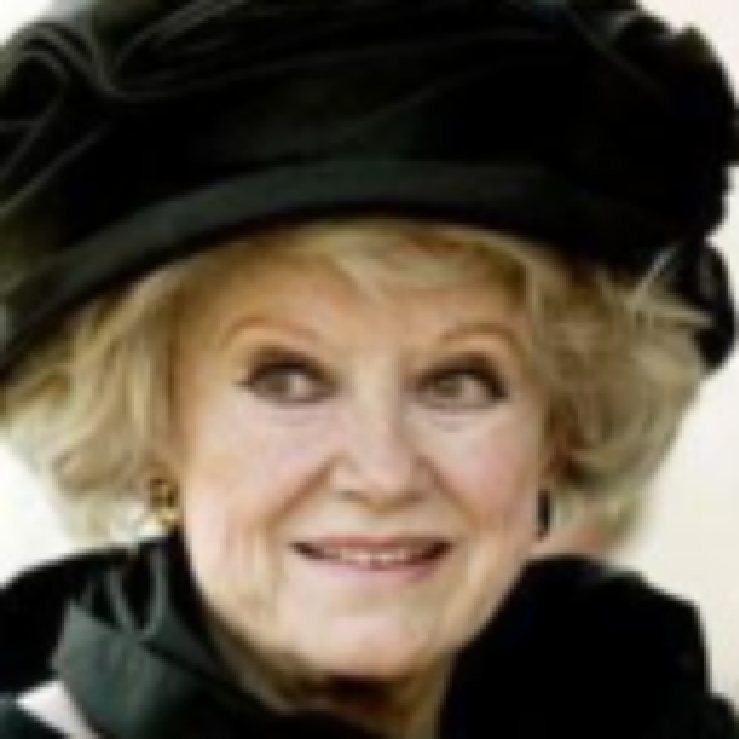} &
  \psnrcell{26.04}{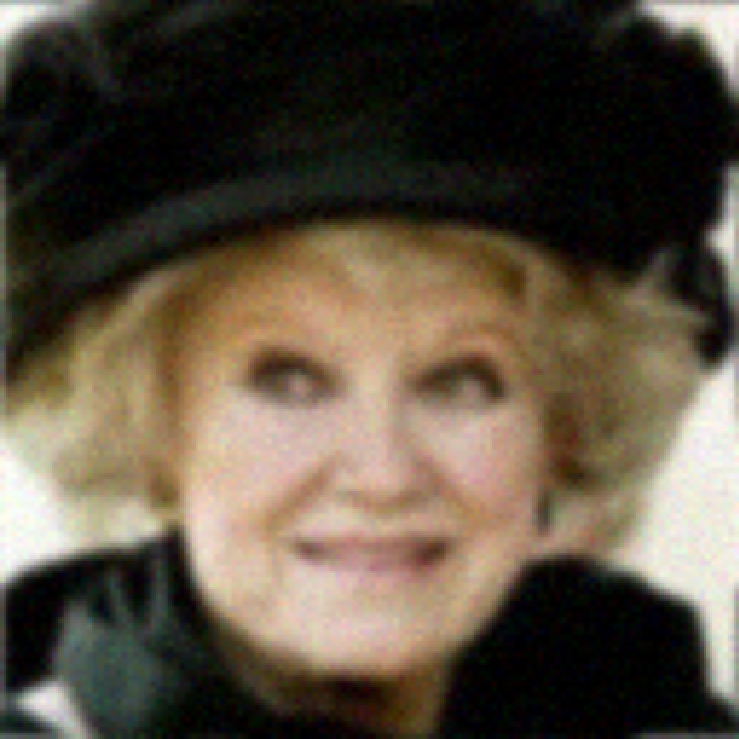} &
  \psnrcell{32.09}{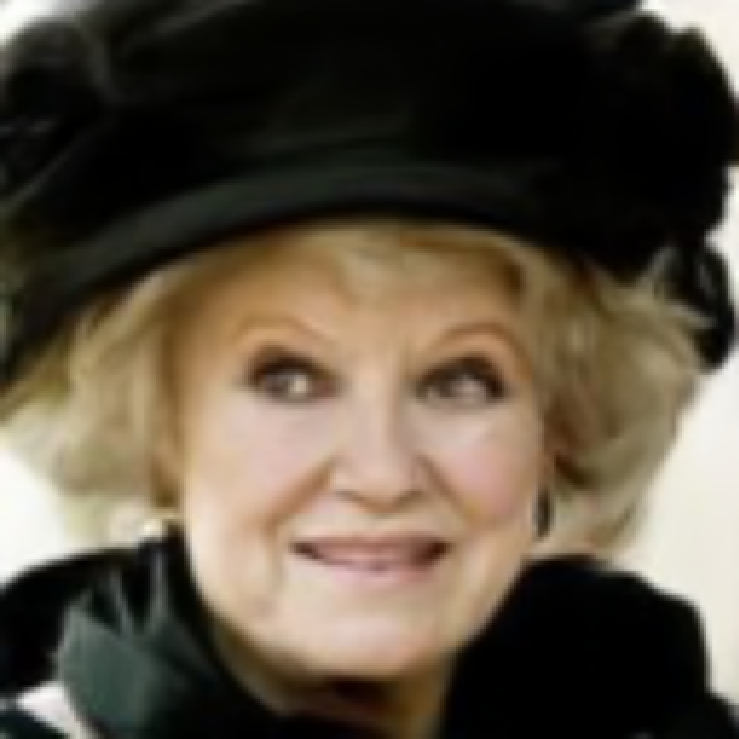} &
  \psnrcell{30.95}{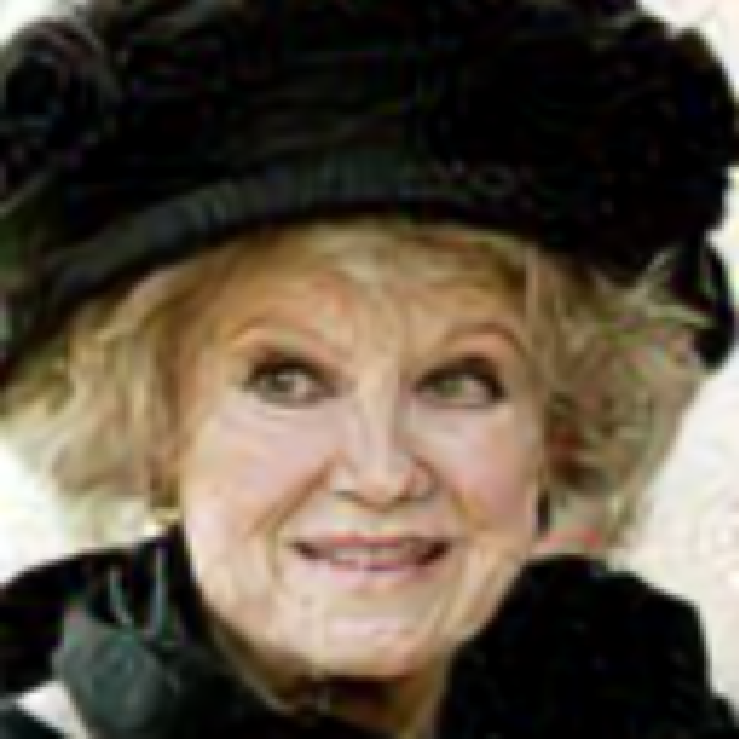} &
  \psnrcell{31.15}{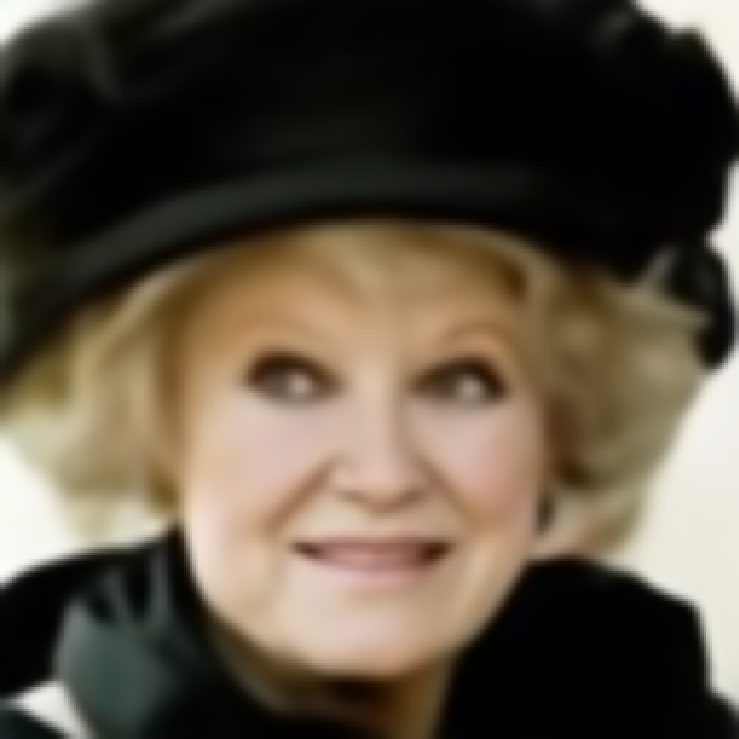} &
  \psnrcell{34.82}{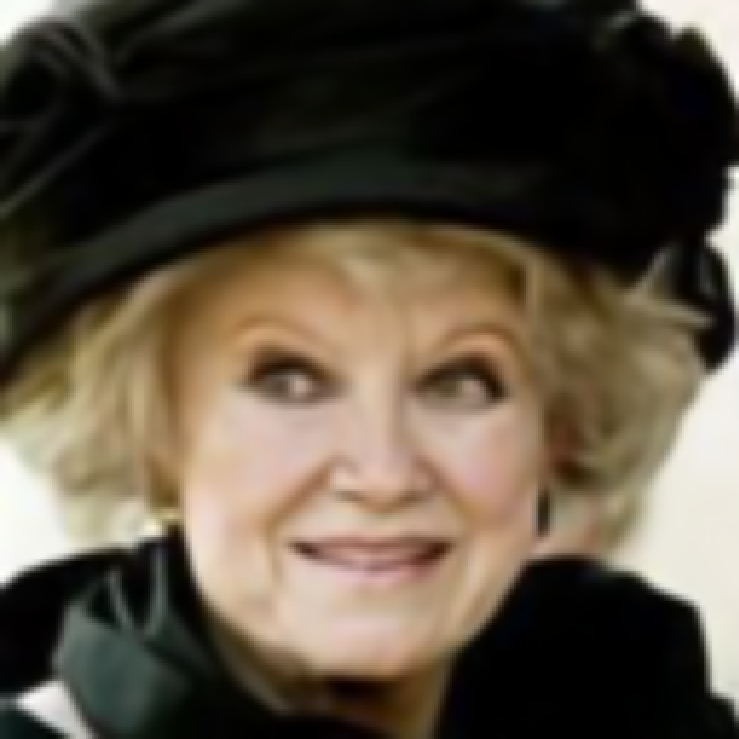} &
  \psnrcell{34.82}{images/celeba_gaussian_deblurring_FFT_flowadmm_psnr34.82} \\
  \vrow{AFHQ-Cat}{Super-Resolution} &
  \plainimg{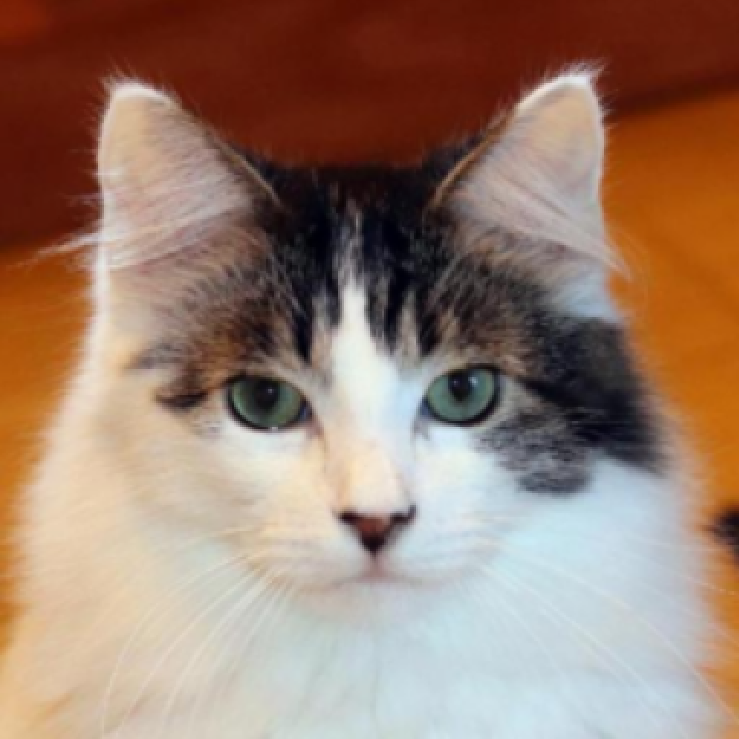} &
  \psnrcell{9.75}{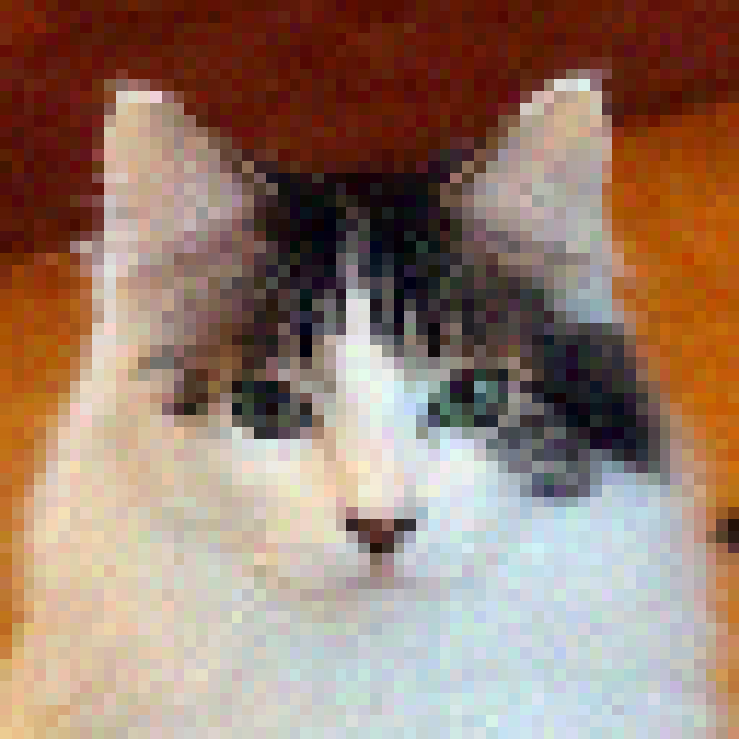} &
  \psnrcell{28.44}{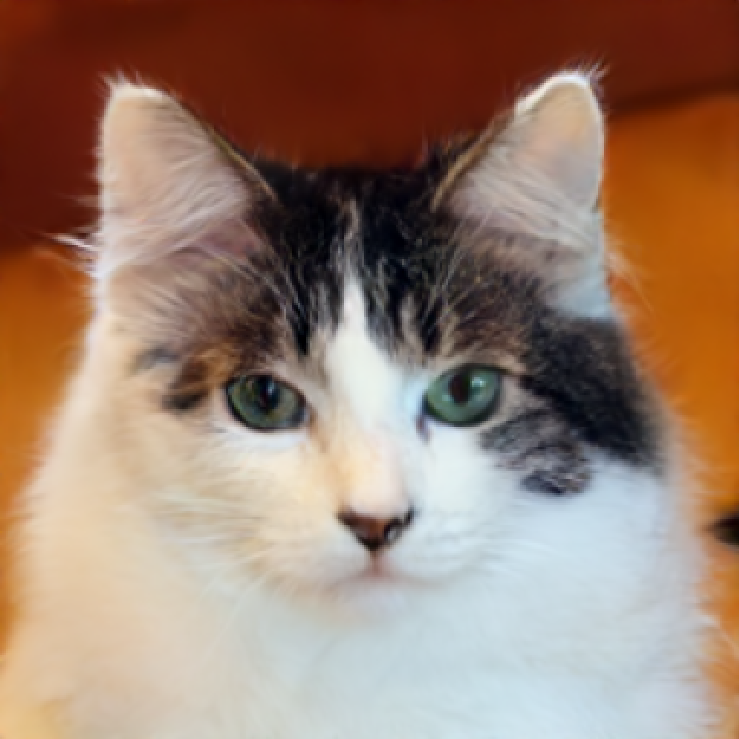} &
  \psnrcell{26.06}{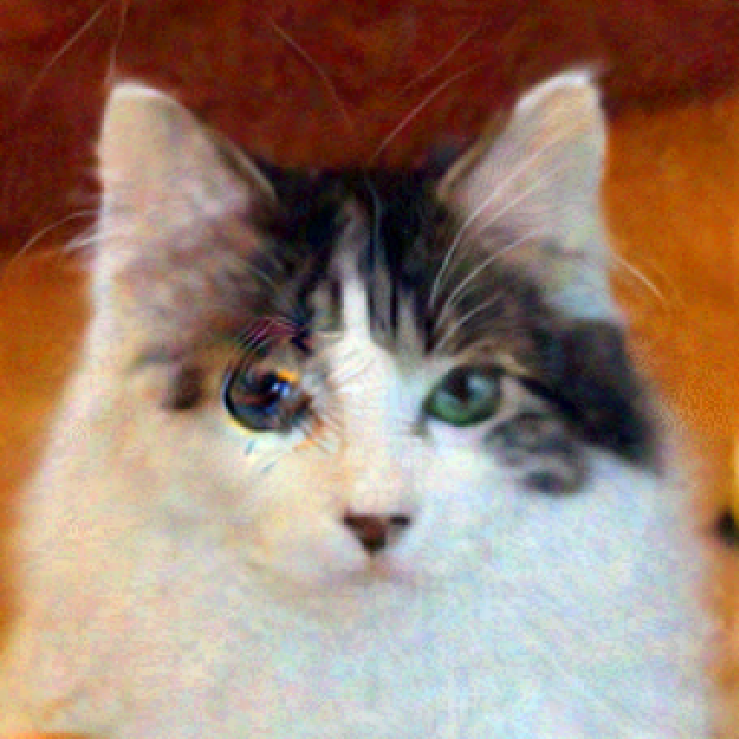} &
  \psnrcell{29.66}{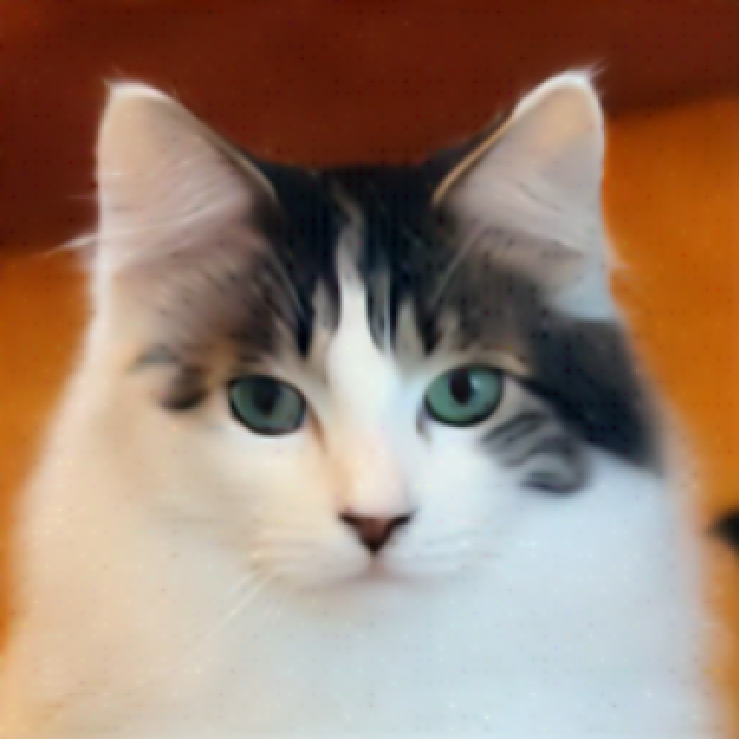} &
  \psnrcell{28.08}{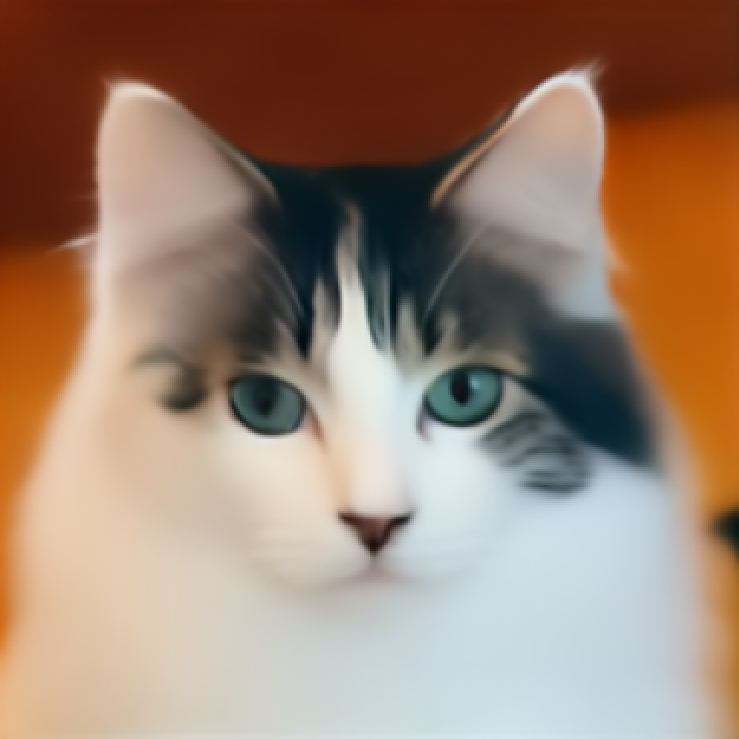} &
  \psnrcell{30.24}{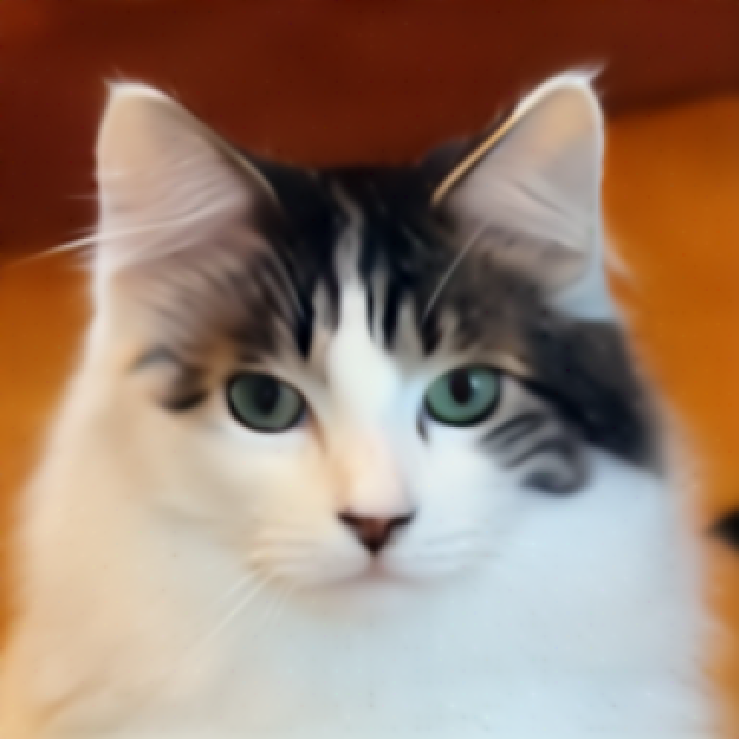} \\
  \vrow{AFHQ-Cat}{Box Inpainting} &
  \plainimg{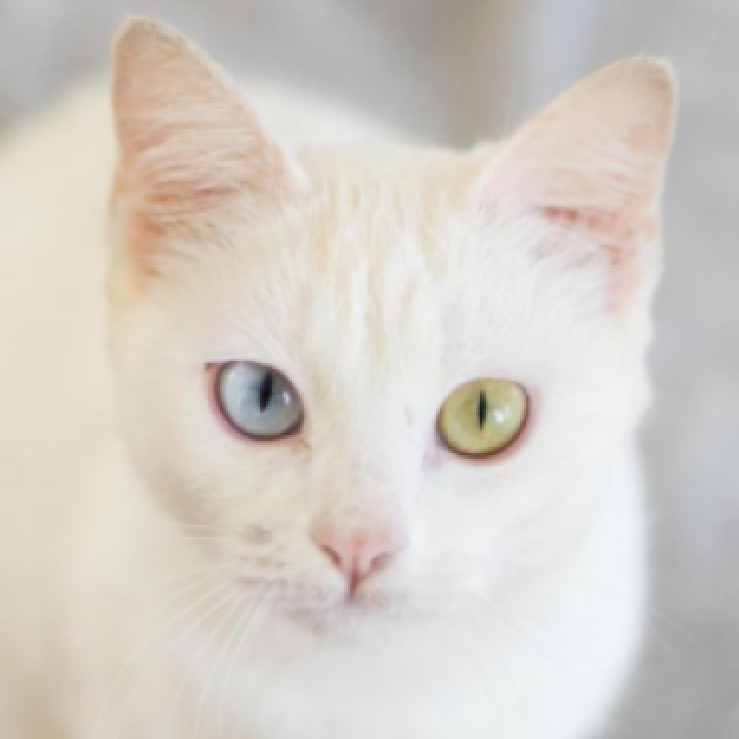} &
  \psnrcell{18.83}{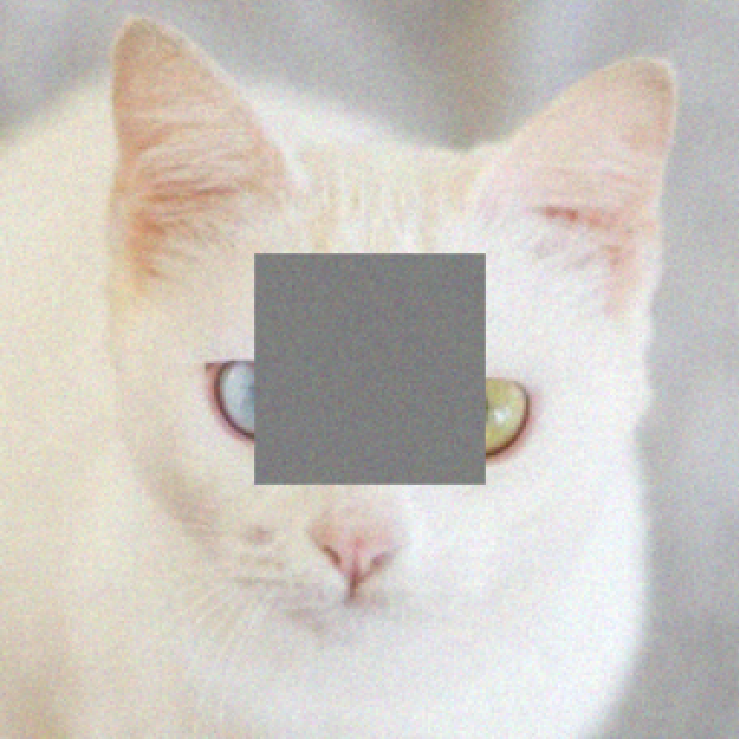} &
  \psnrcell{20.54}{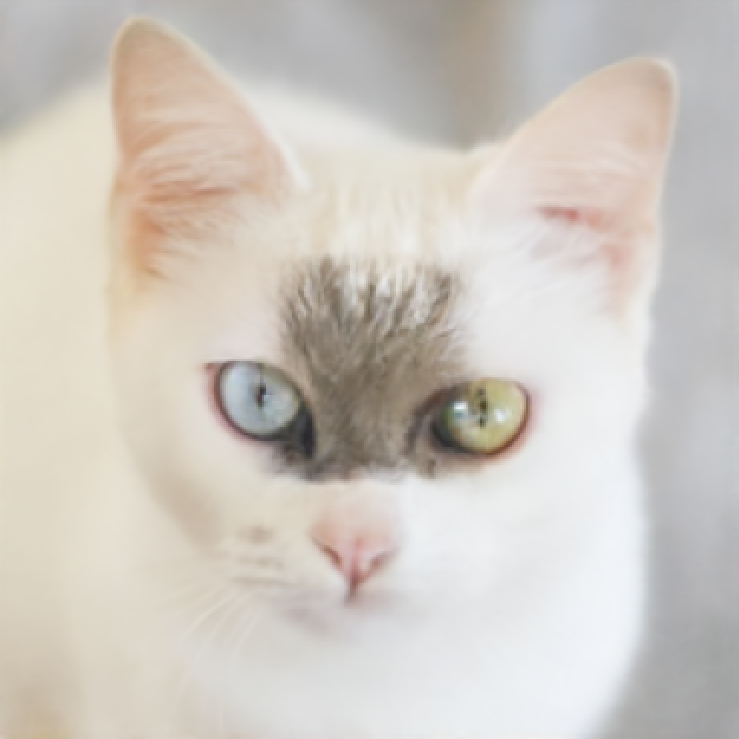} &
  \psnrcell{28.72}{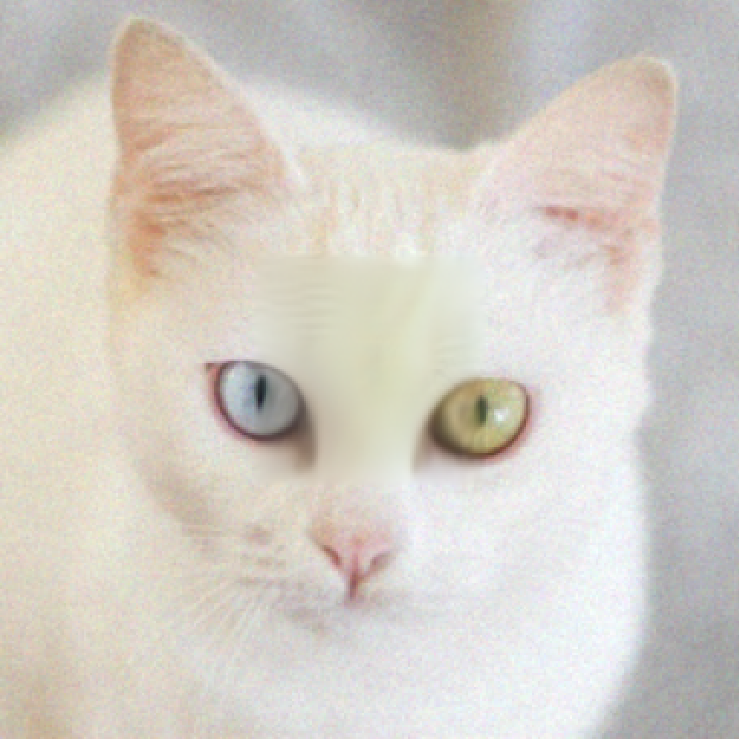} &
  \psnrcell{27.28}{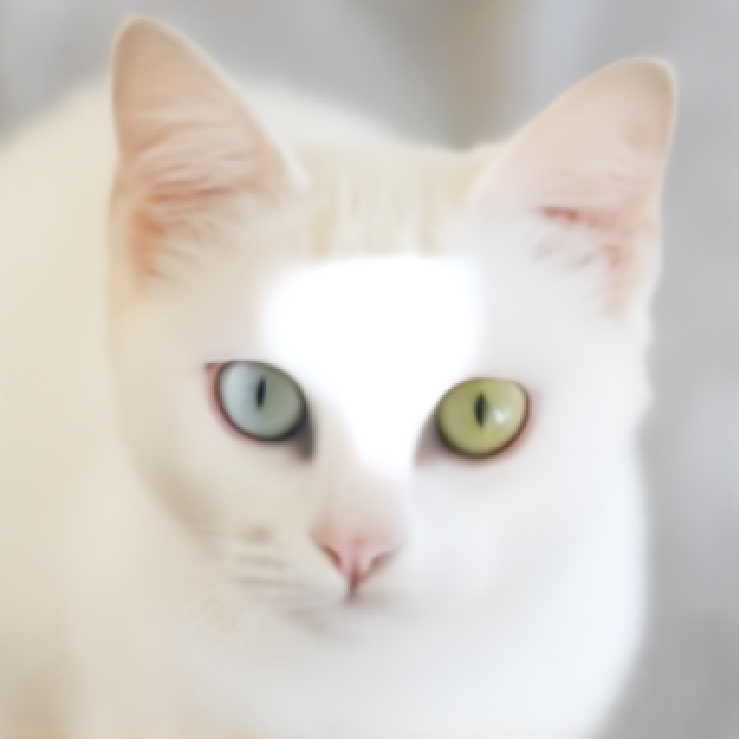} &
  \psnrcell{27.47}{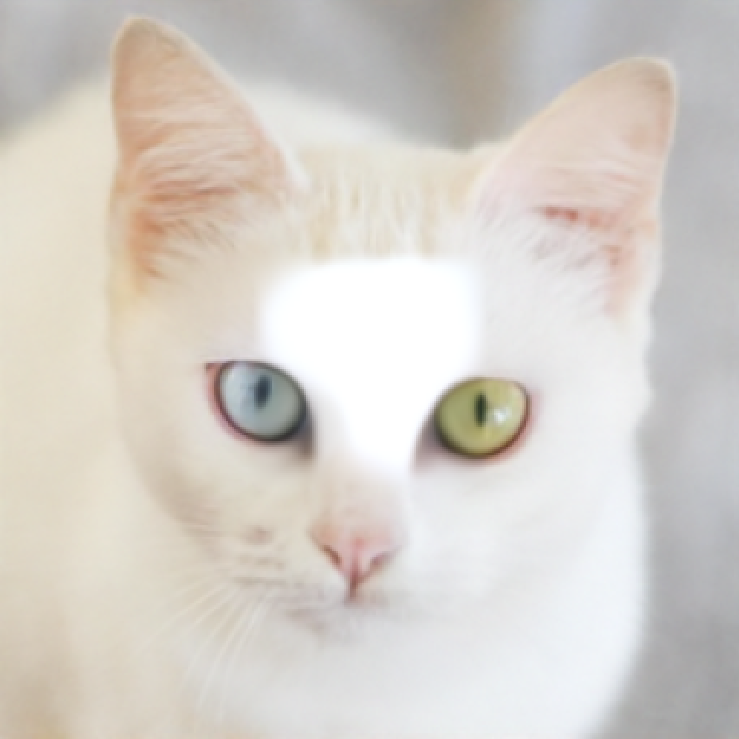} &
  \psnrcell{28.13}{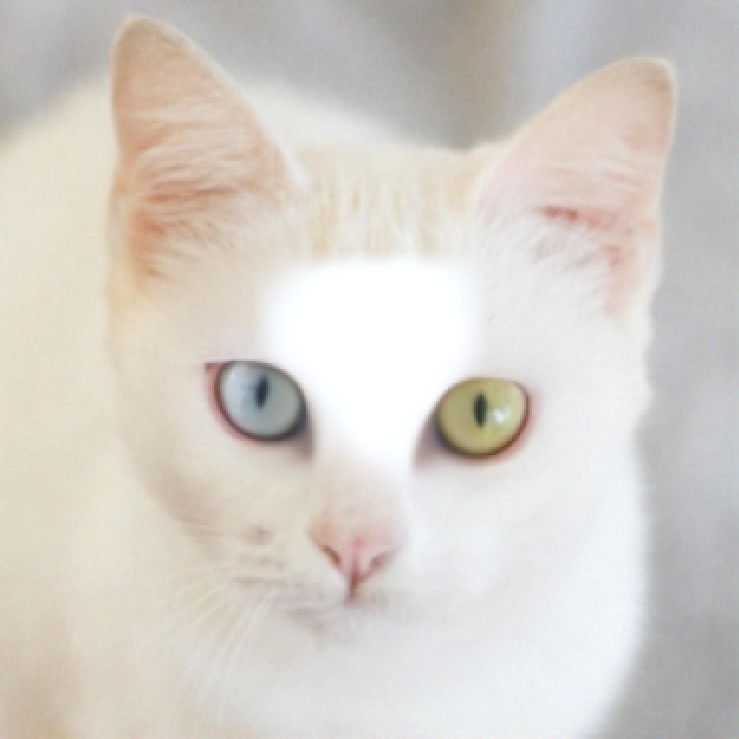} \\
  \vrow{CelebA}{Random Inpainting} &
  \plainimg{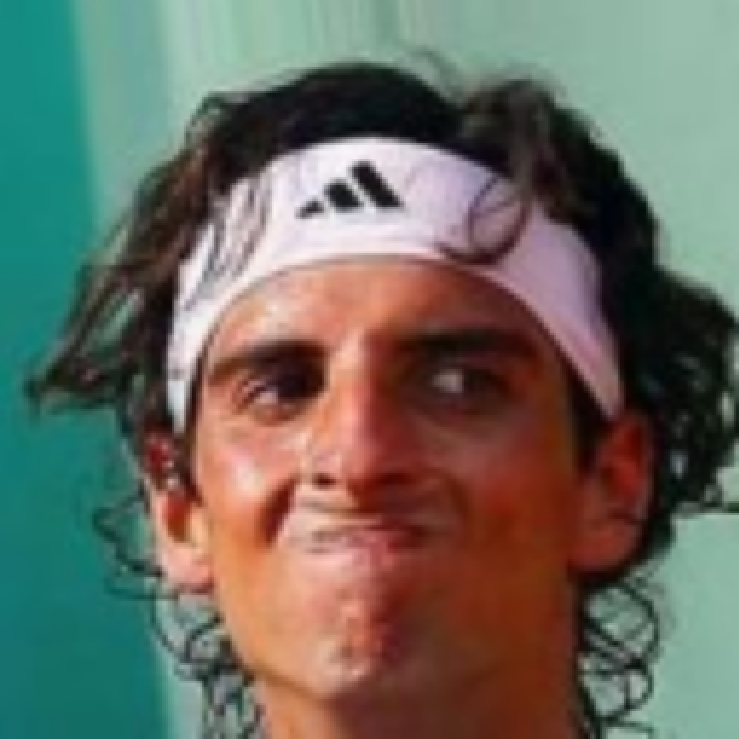} &
  \psnrcell{12.92}{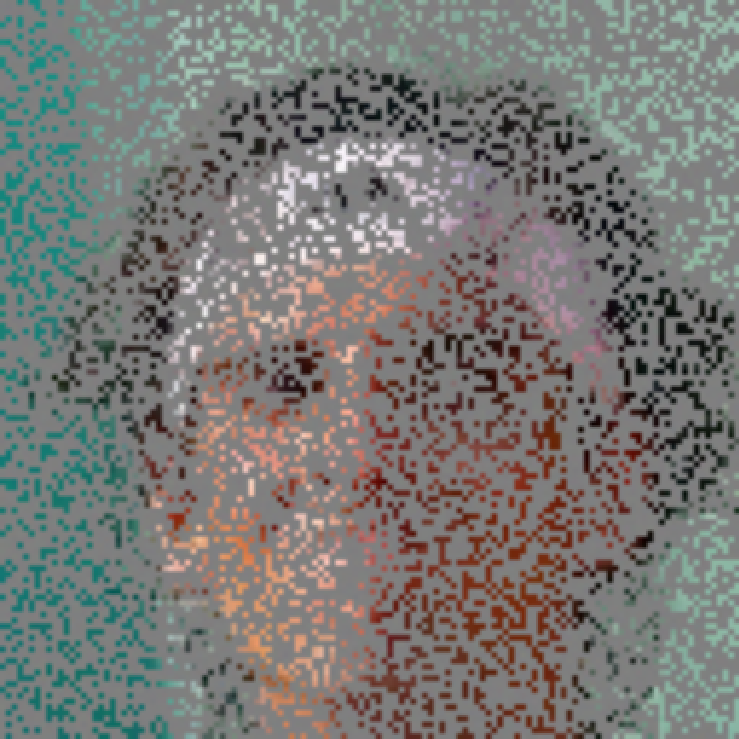} &
  \psnrcell{24.95}{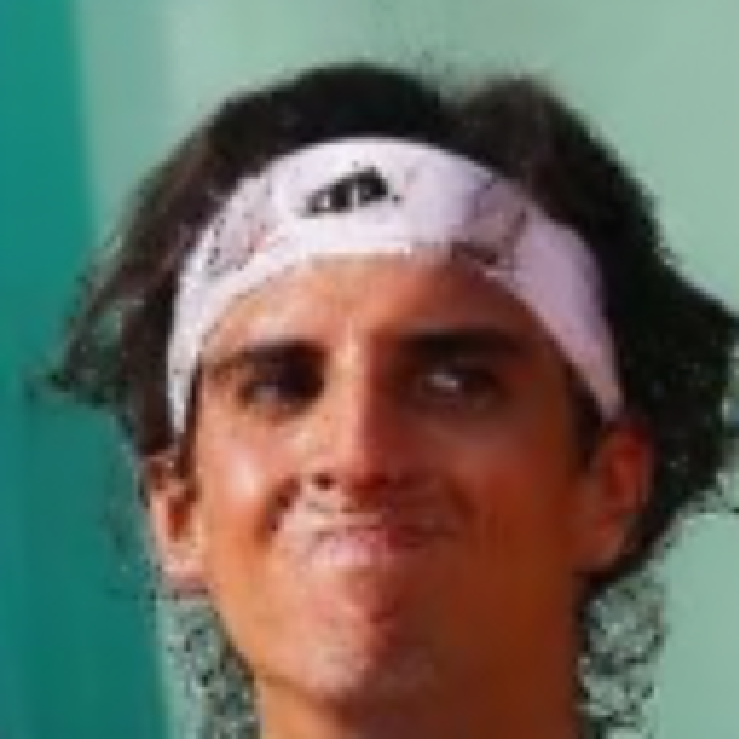} &
  \psnrcell{26.48}{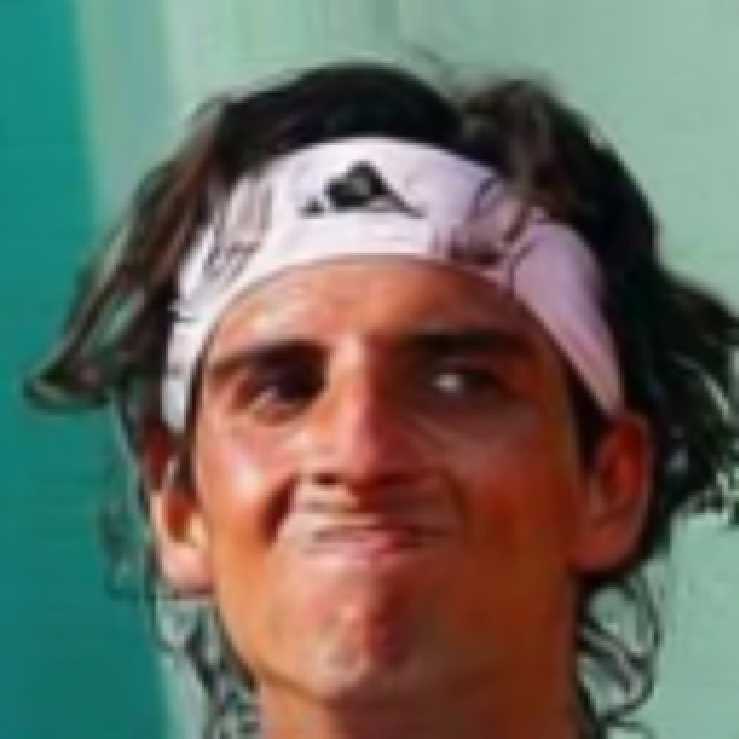} &
  \psnrcell{26.50}{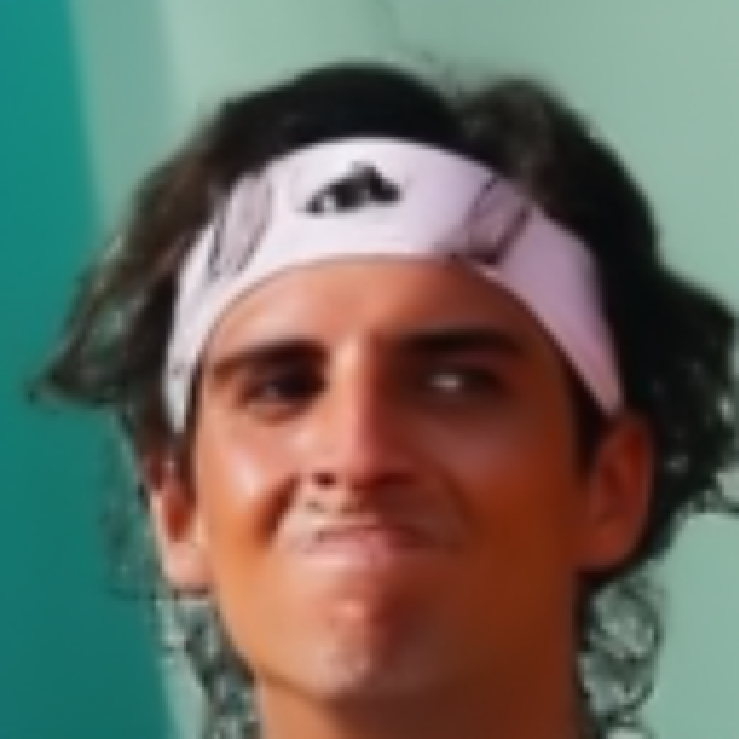} &
  \psnrcell{27.44}{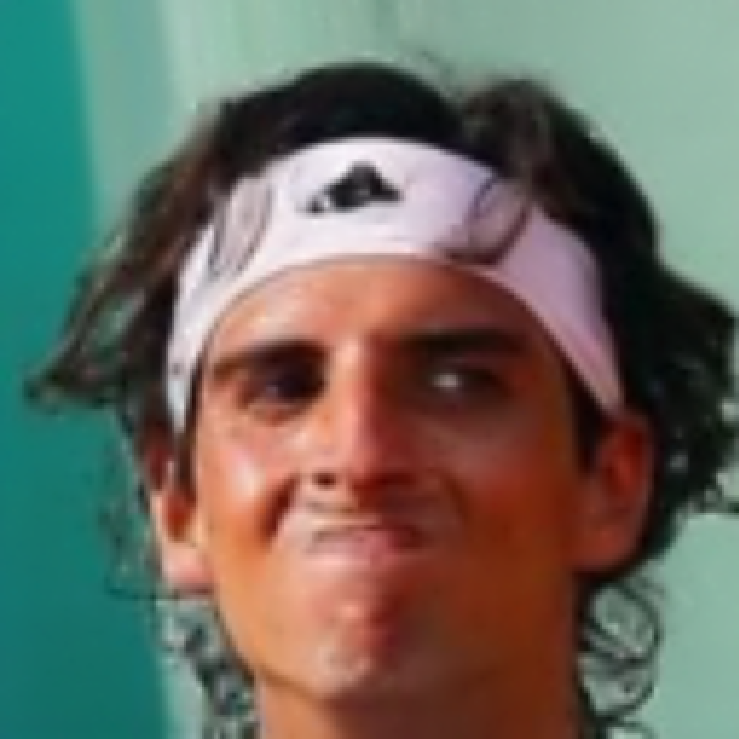} &
  \psnrcell{27.67}{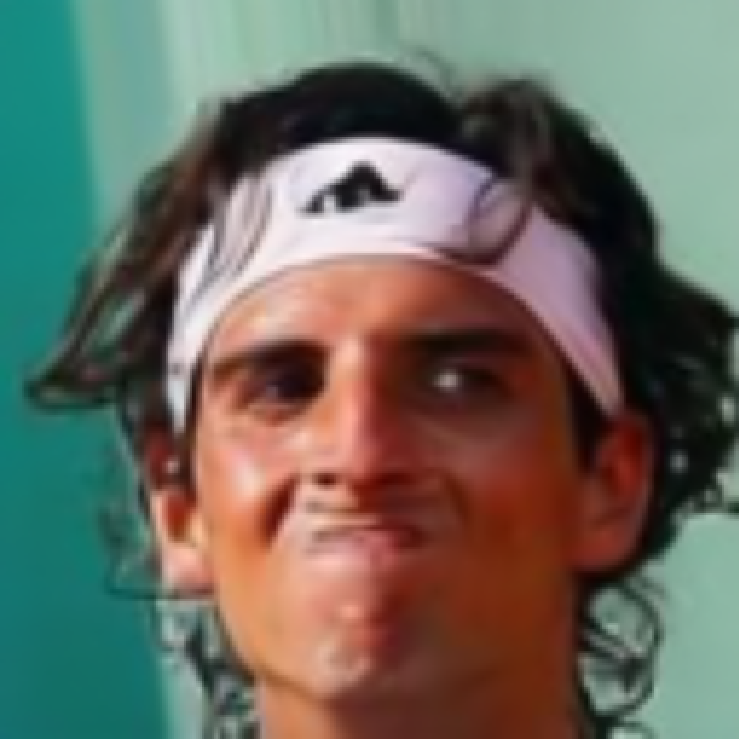} \\
  \bottomrule
  \end{tabular}
  \caption{Example reconstructions for each task. We report the PSNR above each image.}
  \label{fig:result-images}
  \end{figure}

\paragraph{Timing and memory.}
\begin{table}[t]
  \centering
  \caption{Timing and memory metrics per image on the deblurring task on CelebA $(128 \times 128)$ for 100 iterations.}
  \label{tab:time-memory-deblurring}
  \small
 \begin{tabular}{lcc}
  \toprule
  Method & Time / image (s) & Peak memory (GiB) \\
  \midrule
  PnP-Flow5 & 2.84 & 0.30 \\
  Flower5-OT & 5.10 & 0.30 \\
  FlowADMM (sequential averaging) & 2.82 & 0.43 \\
  FlowADMM (batched averaging) & \textbf{1.51} & 1.51 \\
  \bottomrule
  \end{tabular}
\end{table}

We also compare wall-clock time and GPU memory on CelebA Gaussian deblurring,
using the same protocol as in the PnP-Flow paper (100 test images, evaluated as
25 batches of 4 images each). The experiments were conducted on a machine with an i7-8700K CPU and a single RTX 4090 GPU. Table~\ref{tab:time-memory-deblurring} reports the
results for PnP-Flow5, Flower5-OT, and our 100-step FlowADMM configuration. We also report a FlowADMM version that uses batched instead of sequential averaging, which substantially increases performance at the cost of higher peak memory. The speed-up compared to Flower5-OT is due to two factors: FlowADMM performs fewer data term evaluations, and in our implementation these proximal steps are computed in closed form using the structure of the forwarrd operators, whereas Flower uses conjugate gradient iterations (see Appendix~\ref{section:computing-prox-ops}).

\section{Conclusion and Limitations}
\label{sec:conclusions}
We introduced FlowADMM, a PnP ADMM framework based on the mean renoise-denoise operator induced by flow matching models. 
By explicitly formalizing this operator, we derived convergence guarantees for both constant and non-constant timestep schedules under Lipschitz conditions on the underlying flow network. Empirically, FlowADMM achieves state-of-the-art reconstruction quality across a range of inverse problems.
Our work analyzes flow-based PnP methods through an operator-theoretic lens and suggests several directions for future work, including stronger convergence guarantees, and the explicit Lipschitz regularization of flow networks to improve stability.

\paragraph{Limitations.} Our convergence analysis relies on the Lipschitz constant of the underlying flow network that is not explicitly penalized during training. Moreover, the convergence guarantees assume the existence of a fixed point of the limiting FlowADMM operator. In addition, our theoretical analysis focuses on the deterministic mean renoise-denoise operator, whereas the practical Monte Carlo approximation introduces additional stochasticity that is not fully captured by the current theory. Finally, our analysis assumes idealized flow-based denoisers and does not explicitly account for approximation errors from imperfect training.

\paragraph{Broader impact.}
This work focuses on theoretical and algorithmic aspects of solving image inverse problems using flow-based plug-and-play methods, and provides criteria for provable convergence. Potential applications include medical imaging, scientific imaging, and image restoration, where improved reconstruction quality may be beneficial, although limitations in robustness should be carefully considered in safety-critical settings.

% \begin{ack}
% Use unnumbered first level headings for the acknowledgments. All acknowledgments
% go at the end of the paper before the list of references. Moreover, you are required to declare
% funding (financial activities supporting the submitted work) and competing interests (related financial activities outside the submitted work).
% More information about this disclosure can be found at: \url{https://neurips.cc/Conferences/2026/PaperInformation/FundingDisclosure}.

% Do {\bf not} include this section in the anonymized submission, only in the final paper. You can use the \texttt{ack} environment provided in the style file to automatically hide this section in the anonymized submission.
% \end{ack}

\bibliographystyle{plainnat}
\bibliography{refs}

%%%%%%%%%%%%%%%%%%%%%%%%%%%%%%%%%%%%%%%%%%%%%%%%%%%%%%%%%%%%

\appendix

\section{Proofs}
\label{sec:proof-of-prop1}

\begin{proof}[Proof of Remark~\ref{remark:projection?}]
For $x_1 = x$ we have $x_t = tx + (1-t)x_0$. By the law of total expectation
\begin{equation}
    \bar S_t(x) = \mathbb{E}_{x_0}[D_t(tx_1 + (1-t)x_0) \mid x_1 = x] = \mathbb{E}_{x_0}[\mathbb{E}[x_1 \mid x_t] \mid x_1=x] = x
\end{equation}
\end{proof}

\begin{lemma}
    Let $D_t$ be Lipschitz continuous with constant $L$. Then $\bar{S}_t$ is Lipschitz continuous with constant $tL$.
\end{lemma}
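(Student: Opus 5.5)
The plan is to go directly from the definition of $\bar S_t$ as an expectation over the latent noise. We write both values with the same noise variable $\epsilon$ and pull the difference inside the expectation. For $x,y\in\mathbb{R}^m$, linearity of expectation gives
\begin{equation*}
    \bar S_t(x)-\bar S_t(y) = \mathbb{E}_{\epsilon\sim\mathcal{N}(0,I)}\big[D_t(tx+(1-t)\epsilon)-D_t(ty+(1-t)\epsilon)\big].
\end{equation*}
This step is valid provided both expectations are finite. Finiteness holds because a Lipschitz $D_t$ grows at most linearly, $\Vert D_t(w)\rVert\le \Vert D_t(0)\rVert + L\Vert w\rVert$, and Gaussian vectors have finite first moments. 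So the integrand is integrable and the split is justified.

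Next we apply Jensen's inequality for the norm, which is convex: $\Vert\mathbb{E}[Z]\rVert\le\mathbb{E}\Vert Z\rVert$. This yields
\begin{equation*}
    \Vert \bar S_t(x)-\bar S_t(y)\rVert \le \mathbb{E}_\epsilon\big\Vert D_t(tx+(1-t)\epsilon)-D_t(ty+(1-t)\epsilon)\big\rVert.
\end{equation*}

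Now we use the Lipschitz property of $D_t$ pointwise in $\epsilon$. The key observation is that the noise contribution $(1-t)\epsilon$ is identical in both arguments. It therefore cancels, and the two arguments differ by exactly $t(x-y)$. Hence the integrand is bounded by $L\Vert t(x-y)\rVert = tL\Vert x-y\rVert$ for every $\epsilon$. Taking the expectation of this deterministic bound gives the claimed constant $tL$.

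There is no real obstacle here. The only points needing care are the measure-theoretic ones: integrability, addressed above, and the coupling of the same $\epsilon$ in both terms. The coupling is what lets the noise cancel, so it is the conceptual heart of the argument.

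This lemma then feeds into Lemma~\ref{lemma:residual_lipschitz}. There we write $R_t=\bar S_t - I = \mathbb{E}_\epsilon[(1-t)v_t^\theta(tx+(1-t)\epsilon)] + (t-1)x$ and apply the same coupling argument to $v_t^\theta$ instead of $D_t$.
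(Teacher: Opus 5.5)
Your proposal is correct and matches the paper's proof: you couple the same $\epsilon$ in both terms, apply Jensen's inequality to move the norm inside the expectation, and use that the two arguments differ by exactly $t(x-y)$ to obtain the constant $tL$. Your integrability remark (a Lipschitz $D_t$ grows at most linearly, and Gaussian first moments are finite) is a small rigor point the paper leaves implicit.
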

\begin{proof}
    Let $x,y\in\mathbb{R}^d$.
    \begin{equation}
        \bar{S}_t(x) - \bar{S}_t(y) = \mathbb{E}_\epsilon[D_t(tx - (1-t)\epsilon) - D_t(ty - (1-t)\epsilon)]
    \end{equation}
    Taking the norm and applying Jensen's inequality yields
    \begin{align*}
        \Vert \bar{S}_t(x) - \bar{S}_t(y)\rVert &\leq \mathbb{E}_\epsilon [\Vert D_t(tx - (1-t)\epsilon) - D_t(ty - (1-t)\epsilon)\rVert]\\
        &\leq \mathbb{E}_\epsilon[L\Vert tx - (1-t)\epsilon - (ty - (1-t)\epsilon)\rVert]\\
        &= \mathbb{E}_\epsilon[L\Vert tx - ty \rVert]\\
        &= tL \Vert x - y\rVert
    \end{align*}
\end{proof}
\begin{proof}[Proof of Lemma~\ref{lemma:residual_lipschitz}]
    First, assume $(D_t - I)$ is Lipschitz with some constant $L$.
    Let $x,y\in\mathbb{R}^d$ and for brevity define $x_\epsilon = tx + (1-t)\epsilon$, $y_\epsilon = ty + (1-t)\epsilon$ which implies $x_\epsilon - y_\epsilon = t(x-y)$.
    \begin{align*}
        R_t(x) - R_t(y) &= (\bar{S}_t - I)(x) - (\bar{S}_t - I)(y) \\&= \mathbb{E}_\epsilon[D_t(x_\epsilon) - D_t(y_\epsilon)] - (x-y)\\
        &= \mathbb{E}_\epsilon[D_t(x_\epsilon) - D_t(y_\epsilon)] - (t(x-y) + (1-t)(x-y))\\
        &= \mathbb{E}_\epsilon[D_t(x_\epsilon) - D_t(y_\epsilon) - t(x-y)] - (1-t)(x-y)\\
        &= \mathbb{E}_\epsilon[D_t(x_\epsilon) - D_t(y_\epsilon) - (x_\epsilon - y_\epsilon)] - (1-t)(x-y)\\
        &= \mathbb{E}_\epsilon[(D_t - I)(x_\epsilon) - (D_t - I)(y_\epsilon)] - (1-t)(x-y)
    \end{align*}
    Taking the norm and applying Jensen's inequality and the triangle inequality yields
    \begin{align*}
        \Vert (\bar{S}_t &- I)(x) - (\bar{S}_t - I)(y)\rVert \leq \mathbb{E}_\epsilon[\Vert (D_t - I)(x_\epsilon) - (D_t - I)(y_\epsilon) \rVert] + (1-t)\Vert x - y \rVert\\
        &\leq \mathbb{E}_\epsilon[L\Vert t(x-y)\rVert] + (1-t)\Vert x - y \rVert\\
        &= tL \Vert x - y\rVert + (1-t) \Vert x - y\rVert
    \end{align*}
    This means $R_t$ is Lipschitz with constant $tL + (1-t)$.
    If the underlying flow network is $L_v$ Lipschitz then $(D_t - I) = (1-t)v_t^\theta$ is $((1-t)L_v)$-Lipschitz. Setting $L = (1-t)L_v$ in the proof above proves the lemma.
\end{proof}
We can now give a proof of FlowADMM convergence with fixed $t$:
\begin{proof}[Proof of Proposition~\ref{proposition:convergence}]
    From Lemma~\ref{lemma:residual_lipschitz} it follows that $(\bar{S}_t - I)$ is $\xi:=((1-t)(1+tL_v))$-Lipschitz. If $L_v < \frac{1}{1-t}$ then $\xi < 1$. Since $L_v$ is a Lipschitz constant it is also greater or equal to 0 which means $\xi \geq (1-t)$.
    This means that Corollary~3 in \cite{ryu2019plug} is applicable, which proves the convergence of the algorithm under the conditions stated in the proposition. Moreover, the proof in \cite{ryu2019plug} shows that the ADMM iteration is averaged.
\end{proof}

We define $T_{t}$ as the operator defined by a FlowADMM iteration. We just showed that under certain conditions $T_{t}$ is averaged. Assume that $T_t$ has a fixed point. Then the fixed point iteration
\begin{equation}
    x_{k+1} = T_t(x_k)
\end{equation}
converges.
In the following, we will generalize this result to varying $t$-schedules. We start with a more general proposition about general sequences of operators that eventually become averaged.
\begin{proposition}
    \label{proposition:varying-t-convergence}
    Let $(t_k)_k$ be a sequence in $[0, t_\text{max}]$ with $\lim_{k\rightarrow \infty} t_k = t_\text{max} < 1$ and
    \begin{equation}
        \sum_{k=1}^\infty \vert t_k - t_{\text{max}} \rvert < \infty
    \end{equation}
    Let $T_{t_k} : \mathbb{R}^d \to \mathbb{R}^d$ be a family of operators indexed by $t_k$.
    Let $T_* = T_{t_\text{max}}$ be averaged and
    \begin{equation}
        \rVert T_s(w) - T_t(w)\rVert < C\vert s - t\rvert.
    \end{equation}
    Then the iteration
    \begin{equation}
        w_{k+1} = T_{t_k}(w_{k})
    \end{equation}
    converges to a fixed point of $T_*$, if such a fixed point exists.
\end{proposition}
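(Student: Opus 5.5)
The plan is to treat the iteration as an inexact (perturbed) Krasnosel'skii--Mann iteration of the single averaged operator $T_*$, with summable errors, and to invoke the classical convergence result for such iterations. Write
\begin{equation*}
    w_{k+1} = T_*(w_k) + e_k, \qquad e_k := T_{t_k}(w_k) - T_*(w_k).
\end{equation*}
By the assumed uniform Lipschitz dependence on the index, $\Vert e_k\rVert \le C\vert t_k - t_\text{max}\rvert$ regardless of where $w_k$ lies. Hence $\sum_k \Vert e_k\rVert < \infty$ by the summability hypothesis on the schedule. This step is immediate, and it is the reason the hypothesis is phrased uniformly in $w$.

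Next I establish boundedness and quasi-Fejér monotonicity. Let $w^*$ be a fixed point of $T_*$. Since $T_*$ is averaged, it is non-expansive. Therefore
\begin{equation*}
    \Vert w_{k+1} - w^*\rVert \le \Vert T_*(w_k) - T_*(w^*)\rVert + \Vert e_k\rVert \le \Vert w_k - w^*\rVert + \Vert e_k\rVert.
\end{equation*}
With summable $\Vert e_k\rVert$, a standard lemma on quasi-Fejér sequences (if $a_{k+1}\le a_k+\beta_k$ with $\beta_k \ge 0$ summable, then $a_k$ converges) shows that $\Vert w_k - w^*\rVert$ converges for every fixed point $w^*$. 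In particular $(w_k)$ is bounded.

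Then I show the asymptotic regularity $\Vert w_k - T_*(w_k)\rVert \to 0$, which is the main obstacle. Non-expansiveness alone does not give it; averagedness does. Write $T_* = \alpha I + (1-\alpha)N$, or equivalently use the characterization that $T_*$ satisfies
\begin{equation*}
    \Vert T_*x - T_*y\rVert^2 \le \Vert x-y\rVert^2 - \tfrac{1-\alpha}{\alpha}\Vert (I-T_*)x - (I-T_*)y\rVert^2 .
\end{equation*}
(The correct constant depends on the convention: in the paper's convention $T=\alpha I+(1-\alpha)N$, the factor is $\alpha/(1-\alpha)$.) Apply this with $y=w^*$, and expand
\begin{equation*}
    \Vert w_{k+1}-w^*\rVert^2 \le \bigl(\Vert T_*w_k - w^*\rVert + \Vert e_k\rVert\bigr)^2 .
\end{equation*}
Using boundedness of the iterates, this gives
\begin{equation*}
    c\,\Vert w_k - T_*w_k\rVert^2 \le \Vert w_k-w^*\rVert^2 - \Vert w_{k+1}-w^*\rVert^2 + M\Vert e_k\rVert
\end{equation*}
for constants $c,M>0$. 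Summing telescopes the first two terms, and the error sum is finite. Hence $\sum_k \Vert w_k - T_*w_k\rVert^2<\infty$, and so the residual vanishes. The care needed here is in handling the cross term $2\Vert T_*w_k-w^*\rVert\Vert e_k\rVert$ via the bound on the iterates. An alternative I would fall back on is to cite the inexact Krasnosel'skii--Mann theorem (e.g.\ Combettes' result in \cite{monotone}) directly.

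Finally, I conclude with an Opial-type argument in $\mathbb{R}^d$. Boundedness gives a convergent subsequence $w_{k_j}\to \bar w$. Continuity of $T_*$ and the vanishing residual give $T_*\bar w=\bar w$, so $\bar w$ is a fixed point. Applying the quasi-Fejér property with $w^*=\bar w$, the sequence $\Vert w_k-\bar w\rVert$ converges. Along the subsequence it tends to $0$, so its limit is $0$, and the whole sequence converges to $\bar w$, a fixed point of $T_*$.
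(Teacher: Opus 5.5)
Your proof is correct and follows the same route as the paper: both write the iteration as $w_{k+1}=T_*(w_k)+e_k$ with $\Vert e_k\rVert\le C\vert t_k-t_\text{max}\rvert$ summable, which is an inexact Krasnosel'skii--Mann iteration of the averaged operator $T_*$. The only difference is that the paper cites Proposition~5.34 of Bauschke--Combettes, applied to the non-expansive part $N$ with constant relaxation $\lambda$ and errors $e_k/\lambda$, whereas you reprove that result in $\mathbb{R}^d$ via quasi-Fej\'er monotonicity, the averagedness inequality for asymptotic regularity, and a compactness/Opial argument, and each of these steps is valid.
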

\begin{proof}
    Denote
    \begin{equation}
        e_k = T_{t_k}(w_k) - T_*(w_k).
    \end{equation}
    We have
    \begin{equation}
        \Vert e_k \rVert = \Vert T_{t_k}(w_k) - T_*(w_k) \rVert \leq C\vert t_k - t_\text{max}\rvert
    \end{equation} and thus 
    \begin{equation}
        \sum_{k=1}^\infty \Vert e_k\rVert < \infty
    \end{equation}
    from the assumptions. Since $T_*$ is averaged, we can write
    \begin{equation}
        T_* = (1-\lambda)I + \lambda N
    \end{equation}
    with some non-expansive operator $N$.
    Thus,
    \begin{align}
        T_{t_k}(w_k) &= T_*(w_k) + e_k\\ 
                     &= w_k + \lambda (N(w_k) - w_k) + e_k\\
                     &= w_k + \lambda (N(w_k) + \frac{e_k}{\lambda} - w_k)
    \end{align}
    This form allows us to directly apply Proposition 5.34 of \citet{Bauschke2017}. With $(\lambda_k)_k = \lambda$ we get
    \begin{equation}
        \sum_{k=1}^\infty \lambda_k \Vert\frac{e_k}{\lambda}\rVert = \sum_{k=1}^\infty \Vert e_k\rVert < \infty.
    \end{equation}
    The convergence to a fixed point now directly follows from Proposition 5.34 in \cite{Bauschke2017} and the fact that $T_*$ has the same fixed points as $N$.
\end{proof}

These results can finally be used to show the convergence of FlowADMM with varying $t$.
\begin{proof}[Proof of Proposition~\ref{proposition:varying-t-admm-convergence}]
    From the assumptions of Proposition~\ref{proposition:convergence} $v_t^\theta(x)$ is already Lipschitz in $x$. To apply Proposition~\ref{proposition:varying-t-convergence} we need to show that $T_t(w)$ is Lipschitz in $t$. Together with the new condition on the Lipschitzness in $t$ on the flow network we have
    \begin{equation}
        \Vert v_s^\theta (x) - v_t^\theta(y)\rVert \leq L_x \Vert x - y\rVert + L_t\vert s - t\rvert
    \end{equation}
    Clearly the denoiser $D_t(x) = x + (1-t)v_t^\theta(x)$ is also Lipschitz in $x$ and $t$, with constants $C_x$ and $C_s$. For the mean denoising operator we have
    with $x_\alpha = \alpha x + (1-\alpha)\epsilon$ we have
    \begin{align}
        \Vert \bar S_t(x) - \bar S_s(x)\rVert &= \Vert \mathbb{E}[D_t(x_t)] - \mathbb{E}[D_s(x_s)]\rVert \\
        &= \Vert \mathbb{E}[D_t(x_t) - D_t(x_s)] + \mathbb{E}[D_t(x_s) - D_s(x_s)] \rVert \\
        &\leq \mathbb{E} [\Vert D_t(x_t) - D_t(x_s)\rVert] + \mathbb{E}[\Vert D_t(x_s) - D_s(x_s)\rVert]\\
        &\leq C_x\mathbb{E} [\Vert x_t - x_s \rVert] + C_t|s - t \rvert.
    \end{align}
    Since $x_t - x_s = (t-s)(x - \epsilon)$ we get
    \begin{equation}
        \mathbb{E}[\Vert x_t - x_s\rVert] = \vert t-s\rvert(\mathbb{E}[\Vert(x - \epsilon)\rVert].
    \end{equation}
    From the assumption that $T_{t_k}$ becomes averaged in a $\delta$-neighborhood of $t_\text{max}$, it follows that it becomes averaged after a finite number of steps and the sequence of iterates is therefore bounded.
    Therefore, $\mathbb{E}[\Vert(x - \epsilon)\rVert]$ is also bounded, so 
    \begin{equation}
        \mathbb{E}[\Vert x_t - x_s\rVert] < C_\epsilon \vert t - s \rvert
    \end{equation}
    and in total we have
    \begin{equation}
        \Vert \bar S_t(x) - \bar S_s(x)\rVert \leq (C_xC_\epsilon + C_t)\vert s - t\rvert.
    \end{equation}
    Since the only ADMM step that depends on $t$ is the application of $\bar S_t$, the $T_*$ operator inherits the Lipschitzness from $\bar S_t$. The rest follows immediately from Proposition~\ref{proposition:varying-t-convergence}
\end{proof}

\section{Experiments on the Lipschitz condition}
\label{sec:experiments-lipschitz}
To better understand the stability of the mean denoiser in FlowADMM, we estimate the Lipschitz constants of the underlying flow network by analyzing the distribution of Jacobian spectral norms along the late-stage trajectory of FlowADMM for a denoising task
\begin{equation}
    \hat L_v(x, t) \approx \Vert J_x v_t^\theta(x)\Vert_2
\end{equation}
We estimated this quantity at denoiser inputs $\tilde z_k = (t(x_{k+1} + u_k) + (1-t)\epsilon)$ on the last 10 of the 100 stages (corresponding to $t>0.9$) of FlowADMM evaluated on CelebA. We estimate the Jacobian norms by matrix free power iterations on $(J_x v_t(\tilde z_k))^\top J_x v_t(\tilde z_k)$, implemented with forward and reverse mode autodiff in PyTorch, which efficiently compute jacobian-vector-products (jvp) and vector-jacobian-products (vjp). Specifically, we iterate
\begin{align}
    \tilde w_{k+1} &= \frac{J w_k}{\Vert J w_k \rVert } \quad \text{(using jvp),}\\
    w_{k+1} &= \frac{J^\top \tilde{w}_{k+1}}{\Vert J^\top \tilde{w}_{k+1}\Vert}\quad \text{(using vjp)}.
\end{align}
We plot the distributions of the $(1-t)\hat L_v(\tilde z_k,t)$ estimates in histograms for each evaluated timestep in Figure~\ref{fig:lipschitz-histograms}. This quantity corresponds to the Lipschitz condition in Proposition~\ref{proposition:convergence}, where values smaller than 1 suggest stability.
\begin{figure}
    \centering
    \begin{tabular}{ccc}
         \includegraphics[width=0.3\linewidth]{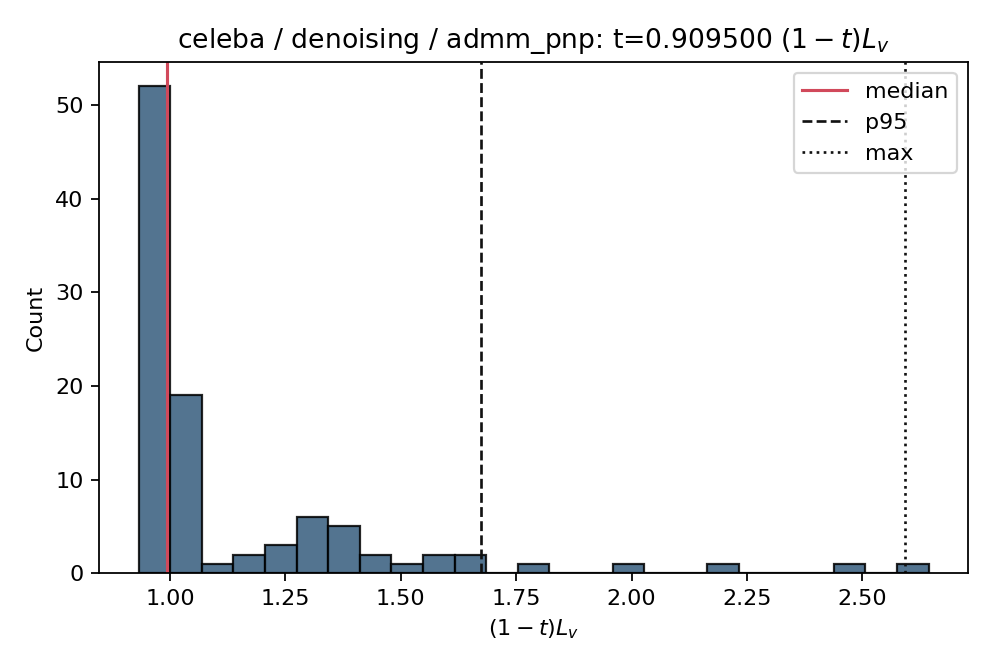} &
         \includegraphics[width=0.3\linewidth]{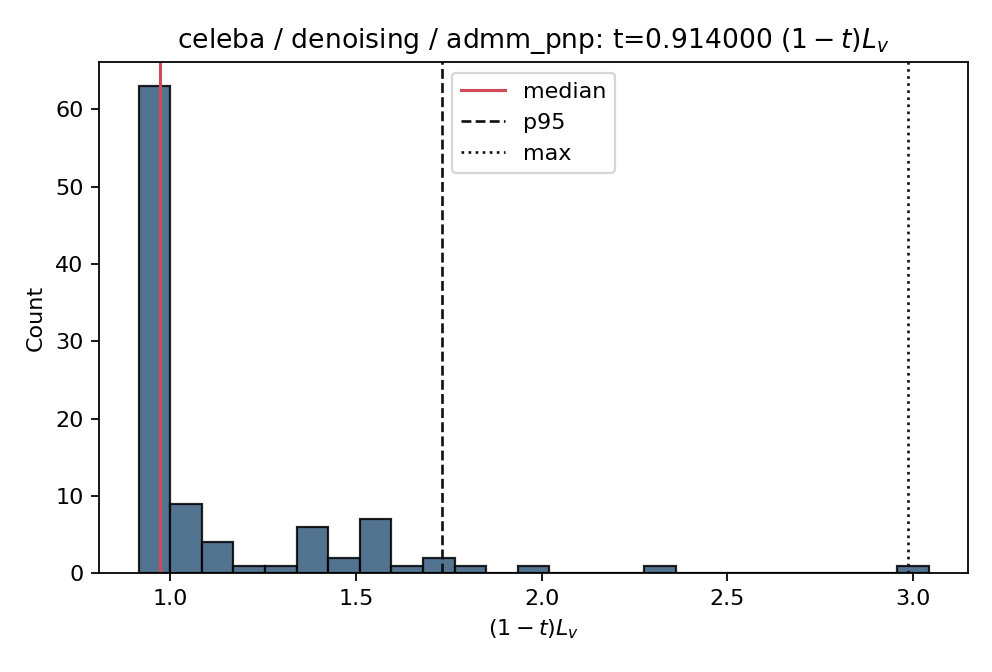} &
         \includegraphics[width=0.3\linewidth]{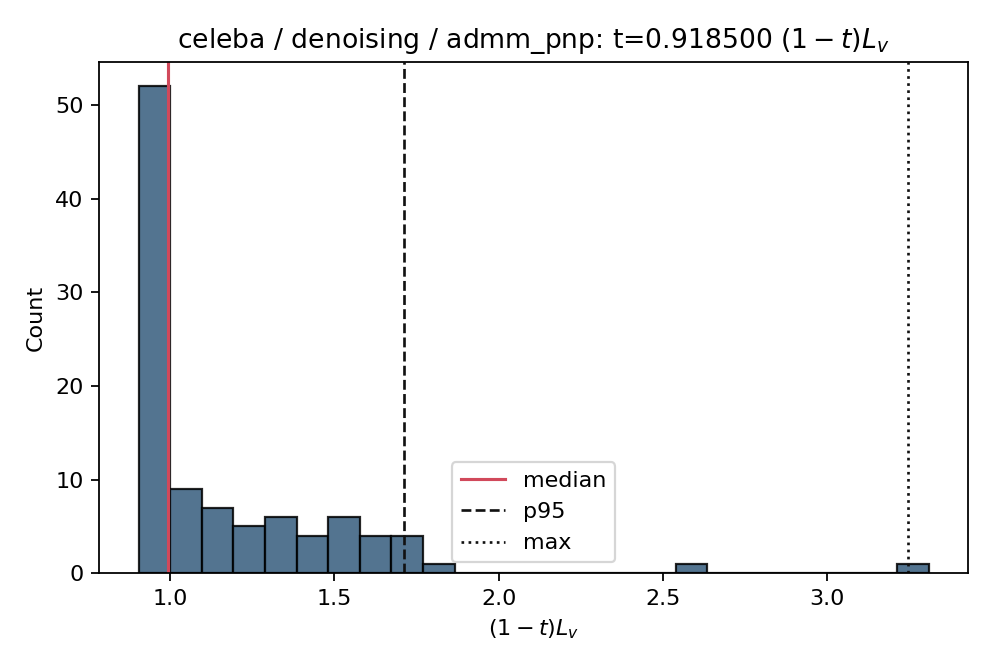} \\
         \includegraphics[width=0.3\linewidth]{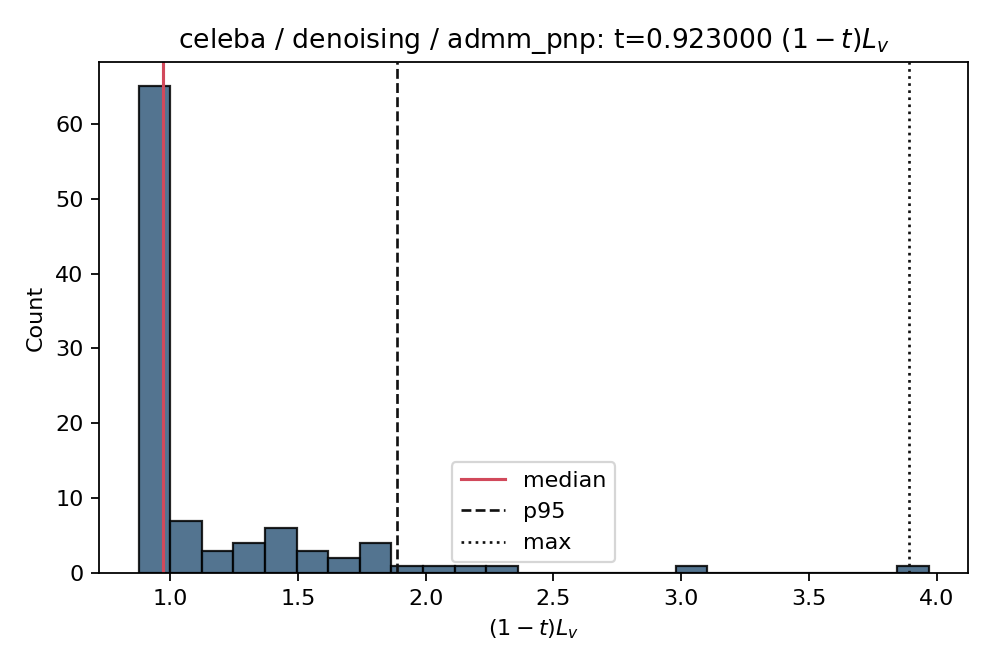} &
         \includegraphics[width=0.3\linewidth]{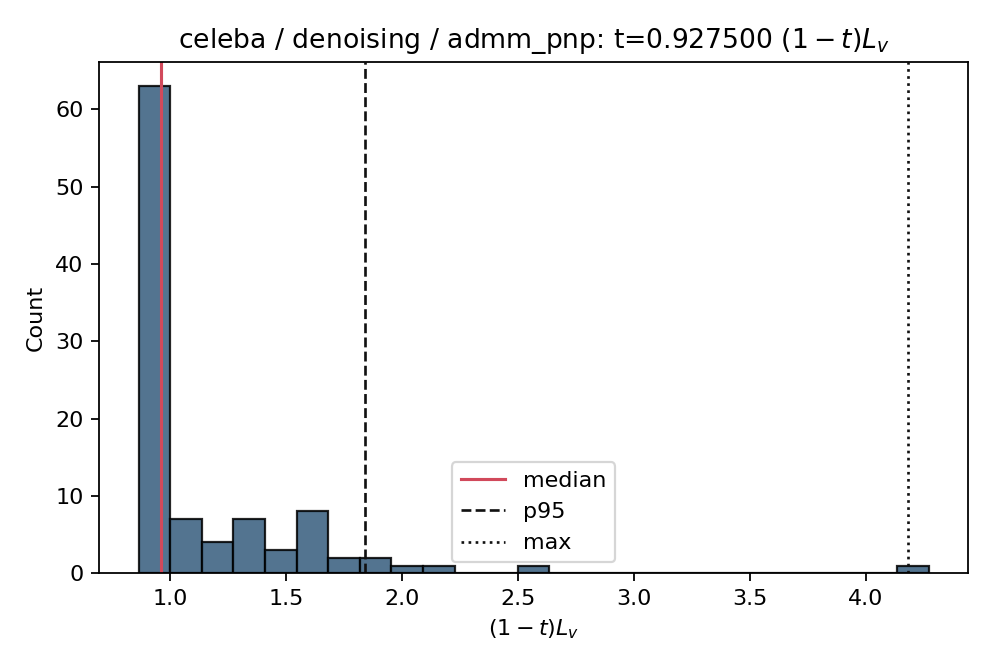} &
         \includegraphics[width=0.3\linewidth]{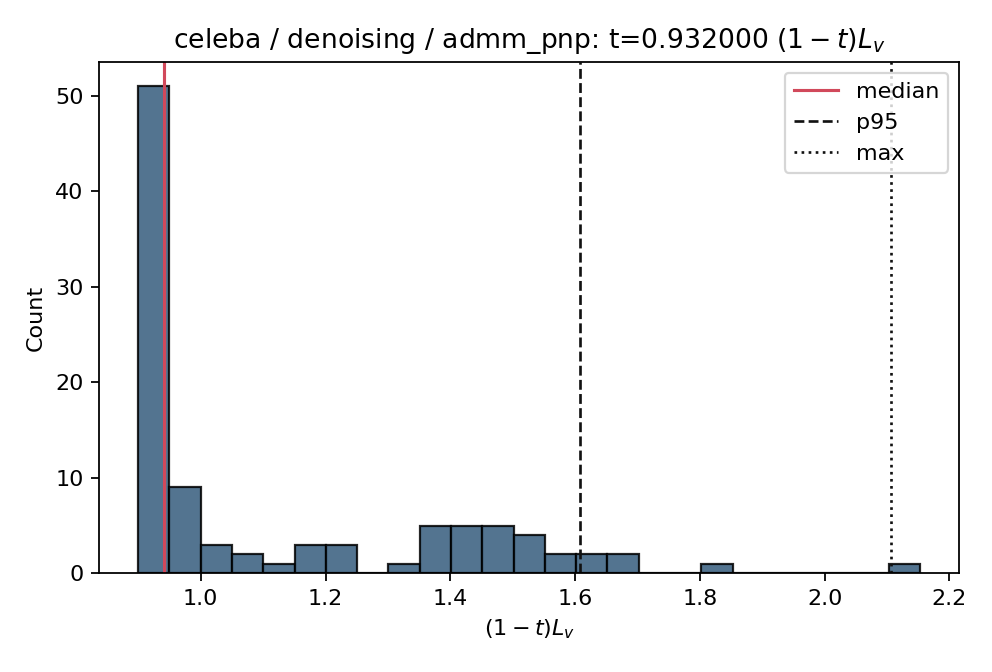} \\
         \includegraphics[width=0.3\linewidth]{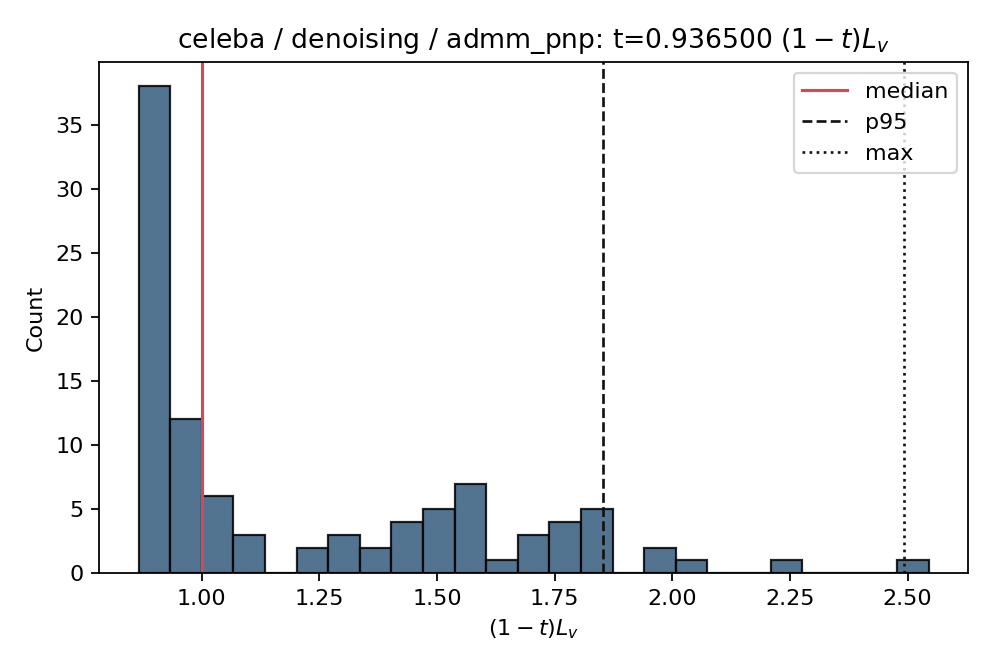} &
         \includegraphics[width=0.3\linewidth]{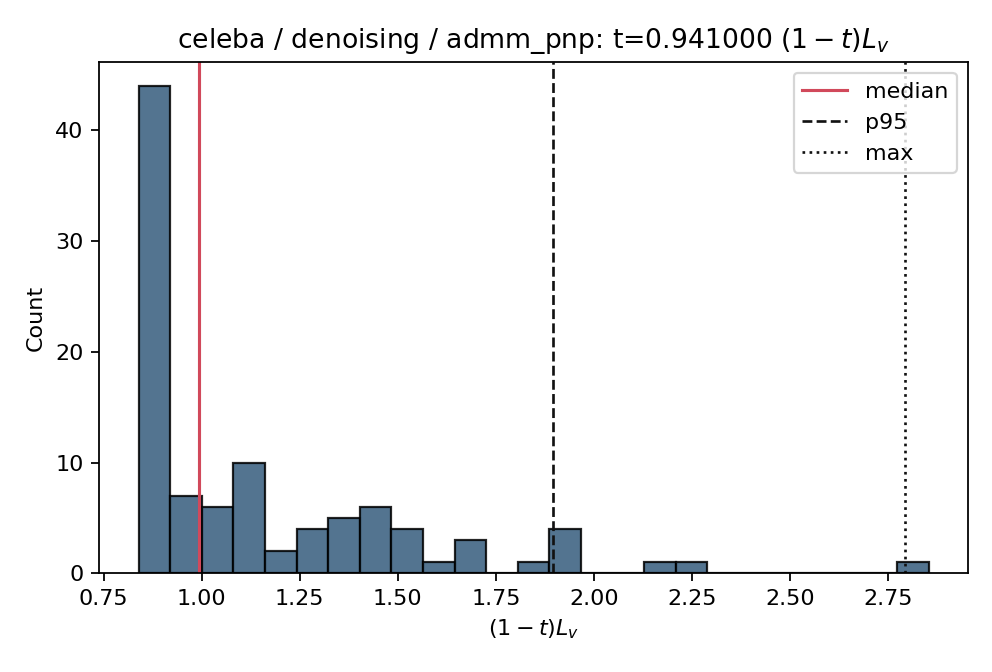} &
         \includegraphics[width=0.3\linewidth]{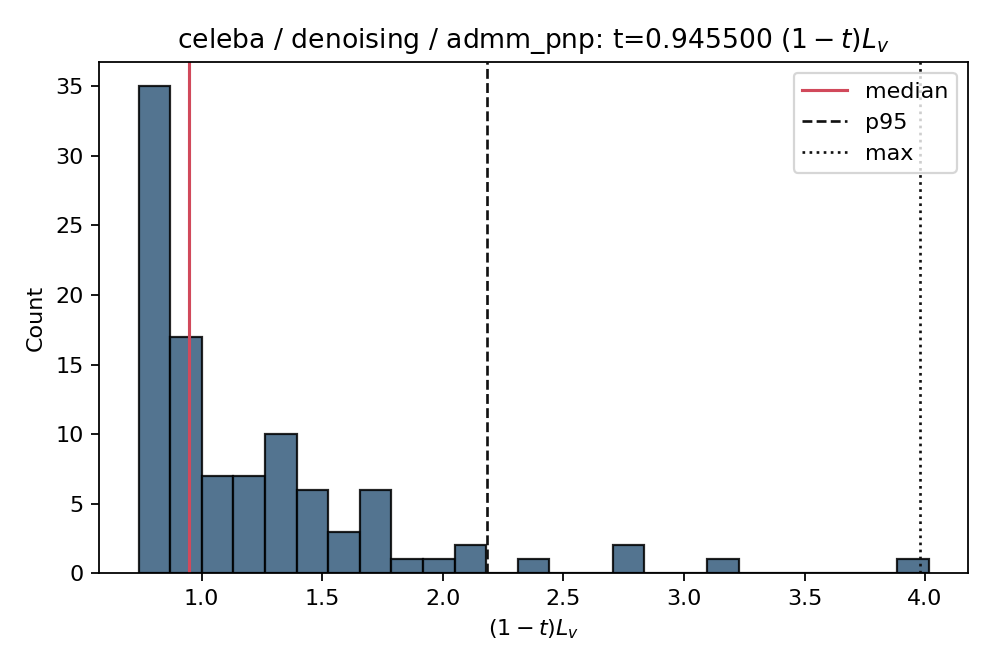} \\
         \includegraphics[width=0.3\linewidth]{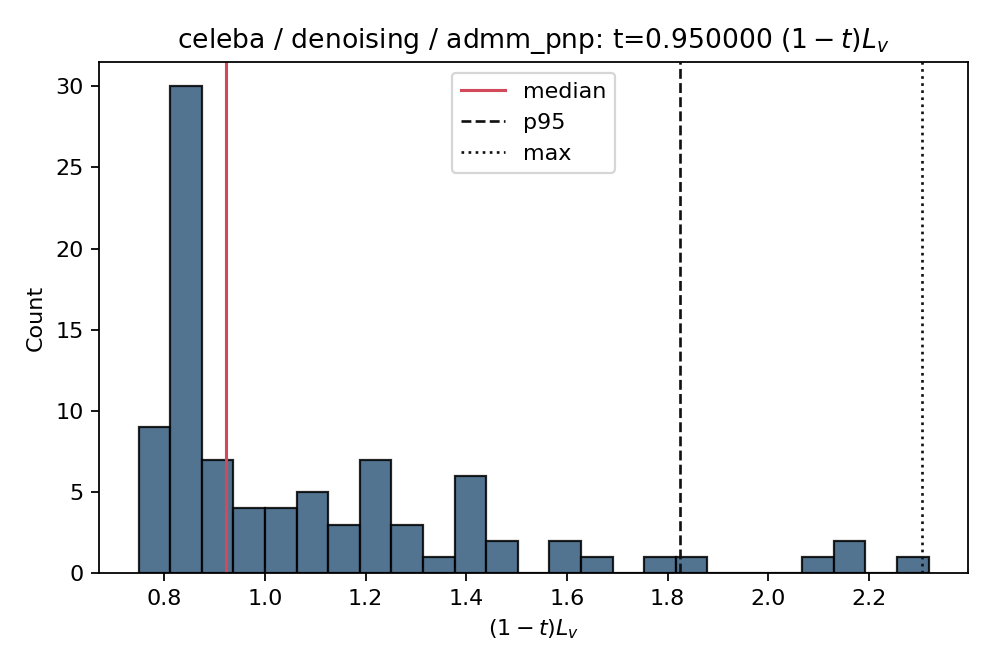}
    \end{tabular}
    \caption{Distribution of the Jacobian norm Lipschitz estimates along the late-stage FlowADMM trajectory on the denoising task.}
    \label{fig:lipschitz-histograms}
\end{figure}
The histograms in Figure~\ref{fig:lipschitz-histograms} shows that the median of $(1-t)\hat L_v$ is below $1$.
This suggests that FlowADMM trajectories largely stay in stable regions.
Nevertheless, the distributions also show long tails with clear spikes above $1$, which shows that convergence using Proposition~\ref{proposition:convergence} cannot be guaranteed without enforcing a hard Lipschitz constraint on the underlying flow network.
Note, that the experiments in this section are only heuristics and we do not claim to compute the global Lipschitz constant of the flow network. Finding a tight upper bound for the global Lipschitz constant of a neural network is in general hard to do~\cite{virmaux2018lipschitz}.

\section{Computing proximal operators}
\label{section:computing-prox-ops}
In this work, we consider quadratic data fidelity terms of the form
\begin{equation}
F_y(x) = \frac{1}{2}\Vert Ax - y\rVert_2^2,
\end{equation}
whose proximal operator is given in closed form by
\begin{equation}
\prox_{\mu F_y}(v)
=
(I + \mu A^\top A)^{-1}(v + \mu A^\top y).
\end{equation}
For large-scale problems, explicitly forming or inverting the matrix $(I + \mu A^\top A)$ is typically infeasible, and matrix-free iterative solvers are often employed.

However, many linear operators in imaging applications have a decomposition
\begin{equation}
A = Q^\top \Lambda P,
\end{equation}
where $P$ and $Q$ are orthogonal transforms and $\Lambda$ is diagonal. This includes, for example, convolution operators (diagonalized by the discrete Fourier transform), masking operators (diagonal in the pixel domain), and subsampling operators.

Using this decomposition, we obtain
\begin{equation}
A^\top A = P^\top \Lambda^\top \Lambda P,
\qquad
A^\top y = P^\top \Lambda^\top Q y,
\end{equation}
which allows rewriting the proximal operator in the transformed domain (left multiplying with $P$) as
\begin{equation}
P\prox_{\mu F_y}(v)
=
(I + \mu \Lambda^\top \Lambda)^{-1}
\left(P v + \mu \Lambda^\top Q y\right).
\end{equation}
Defining $z = P\prox_{\mu F_y}(v)$, the problem reduces into element-wise equations that can be solved by
\begin{equation}
z_i
=
\frac{(P v)_i + \mu \lambda_i (Q y)_i}
{1 + \mu \lambda_i^2},
\end{equation}
where $\lambda_i$ denotes the $i$-th diagonal entry of $\Lambda$. The solution is then obtained via the inverse transform
\begin{equation}
\prox_{\mu F_y}(v) = P^\top z.
\end{equation}
We use this formulation wherever possible and fall back to a CG solver where such a decomposition is not available.

 \section{Task Details and Forward Operators}
  \label{sec:task-details}
  In the main paper, we evaluated on the same task settings as \cite{martin2025pnp} and \cite{pourya2025flower}. In all cases, the measurement model is
\begin{equation}
  y = Ax + \epsilon,
\end{equation}
  where $A$ is the task-specific forward operator and $\epsilon$ is additive Gaussian noise with standard deviation $\sigma_\text{noise}$. The implementation of the forward operators and their transposes was taken without change from the official PnP-Flow code~\cite{martin2025pnp}, which will be detailed in the following paragraphs.

  \paragraph{Denoising.}
  For denoising, the forward operator is the identity,
  $A = I$.
  Thus the measurement is simply a noisy version of the clean image. We use Gaussian noise with standard deviation $\sigma_{\mathrm{noise}}=0.2$.

  \paragraph{Gaussian deblurring.}
  For deblurring, $A$ is convolution with a Gaussian blur kernel implemented in the Fourier domain. The transposed operator $A^\top$ is implemented as convolution with the conjugate filter in the FFT representation. We use a kernel size of $61\times 61$, with blur standard
  deviation $\sigma_{\mathrm{blur}}=1.0$ on CelebA and $\sigma_{\mathrm{blur}}=3.0$ on AFHQ-Cat, followed by additive Gaussian noise with $\sigma_{\mathrm{noise}}=0.05$.

  \paragraph{Super-resolution.}
  For super-resolution, $A$ is a strided subsampling operator that keeps one pixel every $s$ pixels in each spatial direction. Its adjoint $A^\top$ is the corresponding zero-filled upsampling operator. We use scale factor $s=2$ on CelebA and $s=4$ on
  AFHQ-Cat, with additive Gaussian noise $\sigma_{\mathrm{noise}}=0.05$.

  \paragraph{Box inpainting.}
  For box inpainting, $A$ is a binary masking operator that removes a centered square region, and $A^\top=A$ since the mask is diagonal. The mask half-size is $20$ pixels on CelebA and $40$ pixels on AFHQ-Cat, corresponding to a missing square of side
  length $40$ and $80$, respectively. We add Gaussian noise with $\sigma_{\mathrm{noise}}=0.05$.

  \paragraph{Random inpainting.}
  For random inpainting, $A$ is a Bernoulli masking operator with independent probability of missing pixels of $p=0.7$, so that on average $70\%$ of pixels are removed. Again, the operator is symmetric, i.e.\ $A^\top=A$. The random mask is fixed across runs
  using a fixed random seed. We add Gaussian noise with $\sigma_{\mathrm{noise}}=0.01$ on the observed pixels.

\section{Hyperparameter tuning details}
\label{sec:hyperparameter-details}
\begin{algorithm}[ht]
\caption{FlowADMM with Monte Carlo Sampling}
\label{alg:flowadmm}
\begin{algorithmic}[1]
\Require Measurements $y$, forward operator $A$, pretrained flow denoiser $D_t$, iterations $K$, data term step size $\tau$, time schedule $(t_k)_k$, sample schedule $(N_k)_k$
\State Initialize $z_0 = A^\top y$, $x_0 = A^\top y$, $u_0 = 0$
\For{$k=0,\ldots,K-1$}
    \State $x_{k+1} \gets \prox_{\tau F_y}(z_k-u_k)$
    
    \State $z_{k+1} \gets 0$
    \For{$i=1,\ldots,N_k$}
        \State Sample $\epsilon_i \sim \mathcal{N}(0,I)$
        \State $z_{k+1} \gets z_{k+1} + \frac{1}{N_k} D_{t_k}\!\left(t_k(x_{k+1}+u_k) + (1-t_k)\epsilon_i\right)$
    \EndFor

    \State $u_{k+1} \gets u_k + x_{k+1} - z_{k+1}$
\EndFor
\State \Return $z_K$
\end{algorithmic}
\end{algorithm}

We explored a 3-phase schedule and a constant schedule for $N_k$.
The three phase schedule is defined as
\begin{equation}
    N_k = \begin{cases}
        N_e, & \frac{k}{K} < s_1,\\
        N_m, & s_1 \leq \frac{k}{K} < s_2,\\
        N_l & \frac{k}{K} \geq s_2,
    \end{cases}
\end{equation}
We denote a three phase schedule with the above parameters as $\text{3ph}(N_e, N_m, N_l; s_1, s_2)$ and "const" means $N_k = 5$ for all $k$.
The sweep over the three-phase schedule parameters was done in such a way that the total number of $N_k$ stays constant. This ensures that FlowADMM uses the same number of flow evaluations as Flower5-OT and PnP-Flow5.
We want to mention that we mainly explored 3-phase over 2-phase or other schedules to be more easily able to satisfy that constraint, and mainly explored schedules with low $N_m$, i.e. the mid-phase was used to "fill up" the remaining sample budget. In real world settings more complicated schedules are possible and might even lead to better performance.
We did a grid search with the following values
\begin{itemize}
    \item $\tau \in \{0.125,\,0.25,\,0.5,\,1.0,\,2.0,\,5.0,\,10.0\}$
    \item $t_\text{min} \in \{0.1,\,0.2,\,0.3,\,0.5\}$
    \item $t_\text{max} \in \{0.9,\,0.95\}$
    \item $\gamma \in \{0.5,\,1.0,\,2.0\}$
    \item $s_1 \in \{0.5,\,0.6,\,0.7\}$
    \item $s_2 \in \{0.8,\,0.9\}$
    \item $N_e = 1$ (fixed early phase)
    \item $N_m \in \{1,\,3,\,4\},$
    \item $N_l$ was chosen to fix the flow evaluation budget constraint of matching Flower5-OT's and PnP-Flow5's flow evalutions, i.e. $K\cdot N_\text{avg}$ flow evaluations in total
\end{itemize}

Note, that while all CelebA experiments use $K=100$ iterations, some experiments on AFHQ-Cat use more, namely super-resolution with 500 and random inpainting with 200 iterations.
 \begin{table*}[t]
  \centering
  \caption{Hyperparameters used for the CelebA experiments. For baselines, we report the tuned parameters computed in \cite{pourya2025flower}}
  \label{tab:hparams-celeba}
  \scriptsize
  \resizebox{\textwidth}{!}{%
  \begin{tabular}{llccccc}
  \toprule
  Method & Hyperparameter & Denoising & Deblurring & Super-resolution & Random inpainting & Box inpainting \\
  \midrule

  \multirow{2}{*}{OT-ODE}
  & $t_0$   & 0.3 & 0.4 & 0.1 & 0.1 & 0.1 \\
  & $\gamma$ & td  & td  & const & const & td \\
  \midrule

  \multirow{3}{*}{D-Flow}
  & $\lambda$ & 0.001 & 0.001 & 0.001 & 0.001 & 0.001 \\
  & $\alpha$  & 0.1   & 0.1   & 0.1   & 0.1   & 0.1 \\
  & $n_{\text{iter}}$ & 3 & 7 & 10 & 20 & 9 \\
  \midrule

  \multirow{2}{*}{Flow-Priors}
  & $\lambda$ & 100 & 1000 & 10000 & 10000 & 10000 \\
  & $\eta$    & 0.01 & 0.01 & 0.1 & 0.01 & 0.01 \\
  \midrule

  \multirow{3}{*}{PnP-Flow5}
  & $K$ & 100 & 100 & 100 & 100 & 100 \\
  & $N_{\text{avg}}$ & 5 & 5 & 5 & 5 & 5 \\
  & $\alpha$ & 0.8 & 0.01 & 0.3 & 0.01 & 0.5 \\
  \midrule

  \multirow{2}{*}{Flower5-OT}
  & $K$ & 100 & 100 & 100 & 100 & 100 \\
  & $N_{\text{avg}}$ & 5 & 5 & 5 & 5 & 5 \\
  \midrule

  \multirow{7}{*}{FlowADMM}
  & $K$ & 100 & 100 & 100 & 100 & 100 \\
  & $\tau$ & 5.0 & 0.5 & 0.5 & 0.25 & 1.0 \\
  & $t_{\min}$ & 0.5 & 0.5 & 0.3 & 0.3 & 0.1 \\
  & $t_{\max}$ & 0.95 & 0.95 & 0.95 & 0.95 & 0.95 \\
  & $\gamma$ & 1.0 & 0.5 & 1.0 & 0.5 & 2.0 \\
  & $N_k$ &
  $\text{3ph}(1,1,41;0.5,0.9)$ &
  $\text{3ph}(1,1,41;0.5,0.9)$ &
  $\text{3ph}(1,3,35;0.6,0.9)$ &
  $\text{3ph}(1,4,29;0.5,0.9)$ &
  $\text{3ph}(1,4,35;0.7,0.9)$ \\
  \bottomrule
  \end{tabular}}
  \end{table*}

  \begin{table*}[t]
  \centering
  \caption{Hyperparameters used for the AFHQ-Cat experiments. For baselines, we report the tuned parameters computed in \cite{pourya2025flower}}
  \label{tab:hparams-afhq}
  \scriptsize
  \resizebox{\textwidth}{!}{%
  \begin{tabular}{llccccc}
  \toprule
  Method & Hyperparameter & Denoising & Deblurring & Super-resolution & Random inpainting & Box inpainting \\
  \midrule

  \multirow{2}{*}{OT-ODE}
  & $t_0$   & 0.3 & 0.3 & 0.1 & 0.1 & 0.1 \\
  & $\gamma$ & td  & td  & const & const & td \\
  \midrule

  \multirow{2}{*}{Flow-Priors}
  & $\lambda$ & 100 & 1000 & 10000 & 10000 & 10000 \\
  & $\eta$    & 0.01 & 0.01 & 0.1 & 0.01 & 0.01 \\
  \midrule

  \multirow{3}{*}{PnP-Flow5}
  & $K$ & 100 & 500 & 500 & 200 & 100 \\
  & $N_{\text{avg}}$ & 5 & 5 & 5 & 5 & 5 \\
  & $\alpha$ & 0.8 & 0.01 & 0.01 & 0.01 & 0.5 \\
  \midrule

  \multirow{2}{*}{Flower5-OT}
  & $K$ & 100 & 100 & 500 & 200 & 100 \\
  & $N_\text{avg}$ & 5 & 5 & 5 & 5 & 5 \\
  \midrule

  \multirow{7}{*}{FlowADMM}
    & $K$ & 100 & 100 & 500 & 200 & 100 \\
    & $\tau$ & 5.0 & 0.25 & 0.25 & 0.125 & 0.5 \\
    & $t_{\min}$ & 0.5 & 0.5 & 0.3 & 0.3 & 0.1 \\
    & $t_{\max}$ & 0.95 & 0.95 & 0.95 & 0.95 & 0.9 \\
    & $\gamma$ & 1.0 & 0.5 & 1.0 & 0.5 & 2.0 \\
    & $N_k$ &
    $\mathrm{3ph}(1,1,41;0.5,0.9)$ &
    const &
    $\mathrm{3ph}(1,4,29;0.5,0.9)$ &
    $\mathrm{3ph}(1,3,33;0.5,0.9)$ &
    $\mathrm{3ph}(1,3,19;0.6,0.8)$ \\
    \bottomrule
    \end{tabular}}

  \end{table*}

\section{Ablations}
\label{sec:ablations}
\subsection{Influence of hyperparameters}
\begin{table}[t]
  \centering
  \caption{Ablation of FlowADMM on the CelebA validation set for the five inverse problems. We report mean PSNR over the five tasks. The top row is the final method used in the
  paper, and all subsequent rows are measured relative to it.}
  \label{tab:flowadmm_ablation}
  \scriptsize
  \begin{tabular}{lccc}
  \toprule
  Setting & Mean PSNR $\uparrow$ & $\Delta$ vs final method & Max single-task drop \\
  \midrule
  Final method & \textbf{34.1815} & - & - \\
  Shared $\mathrm{3ph}(1,3,18;0.5,0.8)$ & 34.1527 & -0.0288 & -0.0484 \\
  Constant $N_k=5$ & 33.9064 & -0.2751 & -0.3499 \\
  Shared $t_{\min}=0.3$ & 34.1396 & -0.0419 & -0.1508 \\
  Shared $\gamma=1.0$ & 34.1198 & -0.0617 & -0.2594 \\
  \bottomrule
  \end{tabular}
  \end{table}
We evaluate the sensitivity of our hyperparameters and whether they can be shared between tasks. Table~\ref{tab:flowadmm_ablation} shows that the main gains come from late-heavy three-phase averaging rather than from fine-tuning the time schedule. Relative to the final per-task tuned
method, replacing the task-specific schedule by the shared three-phase schedule $\mathrm{3ph}(1,3,18;0.5,0.8)$ costs only 0.0288 dB on average, whereas reverting to constant averaging
loses 0.2751 dB, showing that the principled late-heavy schedule leads to significant performance gains. We also find that sharing $t_{\min}=0.3$ or $\gamma=1.0$ incurs modest average drops of 0.0419 dB and 0.0617
dB, respectively, with the largest degradation concentrated on box inpainting.
These results suggest that most hyperparameters except $\tau$ can be unified between tasks, incuring only modest performance penalties.
\subsection{Applying the late-heavy sampling schedule to PnP-Flow}
\label{sec:late-heavy-pnp-flow}
\begin{table}[t]
    \centering
    \caption{CelebA test-set per-task PSNR comparison between the standard PnP-Flow5 baseline, using the late-heavy $N_k$ schedule in PnP-Flow (PnP-Flow-KAvg), and FlowADMM. Positive deltas indicate improvements over the corresponding reference method.}
    \label{tab:pnpflow-kavg-vs-flowadmm-taskwise}
    \small
    \begin{tabular}{lccccc}
    \toprule
    Task & PnP-Flow5 & PnP-Flow-KAvg & FlowADMM & $\Delta$ vs PnP-Flow5 & $\Delta$ vs FlowADMM \\
    \midrule
    Denoising & 32.3119 & 32.3972 & 33.2049 & +0.0853 & -0.8077 \\
    Deblurring & 34.8035 & 34.8616 & 35.6008 & +0.0581 & -0.7392 \\
    Super-resolution & 31.5030 & 33.3271 & 34.0372 & +1.8241 & -0.7101 \\
    Box inpainting & 31.0589 & 31.5672 & 31.7281 & +0.5083 & -0.1609 \\
    Random inpainting & 34.0700 & 34.2278 & 35.4008 & +0.1578 & -1.1730 \\
    \bottomrule
    \end{tabular}
  \end{table}

We further investigate the influence of the late-heavy $N_k$ schedule by also applying it to PnP-Flow. We tune the schedule together with other PnP-Flow hyperparameters for each task on CelebA.
Table~\ref{tab:pnpflow-kavg-vs-flowadmm-taskwise} shows that late-heavy sampling schedule improves PnP-Flow on all five CelebA tasks, with the largest gains on super-resolution and box inpainting. However, FlowADMM still remains better on every
task, with the remaining gap ranging from 0.1609 dB on box inpainting to 1.1730 dB on random inpainting. This suggests that the $N_k$ schedule is inherently advantageous for renoise-denoise methods, while also giving evidence that ADMM has an advantage over forward-backward splitting methods for flow-based PnP methods.

\section{Additional results}
\subsection{Trajectory visualizations}
Figures~\ref{fig:traj_deblurring}, \ref{fig:traj_superres}, \ref{fig:traj_rand_inp} and \ref{fig:traj_box_inp} visualize the FlowADMM trajectory through the primal data-consistency iterate $x$, the prior iterate $z$, and the dual variable $u$. In the first iterations, $x$ and $z$ can differ a lot. $x$ reacts directly to the measurement model through the proximal update, whereas $z$ stays closer to the flow prior. As the iterations proceed, the gap between the two iterates shrinks and both converge to the same reconstruction, indicating that the consensus constraint is being satisfied. This highlights an advantage of the ADMM over proximal gradient descent: data consistency and prior regularization are handled by separate, interpretable states rather than being entangled in a single update.

This separation is especially useful in the early phase of the reconstruction, where the observation model and the learned prior may disagree strongly. FlowADMM allows each component to make large task-specific corrections before forcing consensus, which is not directly visible in single-state proximal gradient descent trajectories. Note that we initialize $x = z = A^\top y$.

Toward the end of the iteration the dual variable $u$ becomes 0. This can be explained with the following.
Near the end of the algorithm $\bar S_t(x) \approx x$. This happens because $t$ gets closer to 1 (i.e. the prior gets weaker) and from Remark~\ref{remark:projection?} because $x$ is near the manifold. This means the $z$ and $u$ updates become
\begin{align}
    z_{k+1} &= x_{k+1} + u_k\\
    u_{k+1} &= u_k + x_{k+1} - (x_{k+1} + u_k) = 0,
\end{align}
so $u$ naturally vanishes in late iterations.

\begin{figure}[h]
    \centering
    \includegraphics[width=\linewidth]{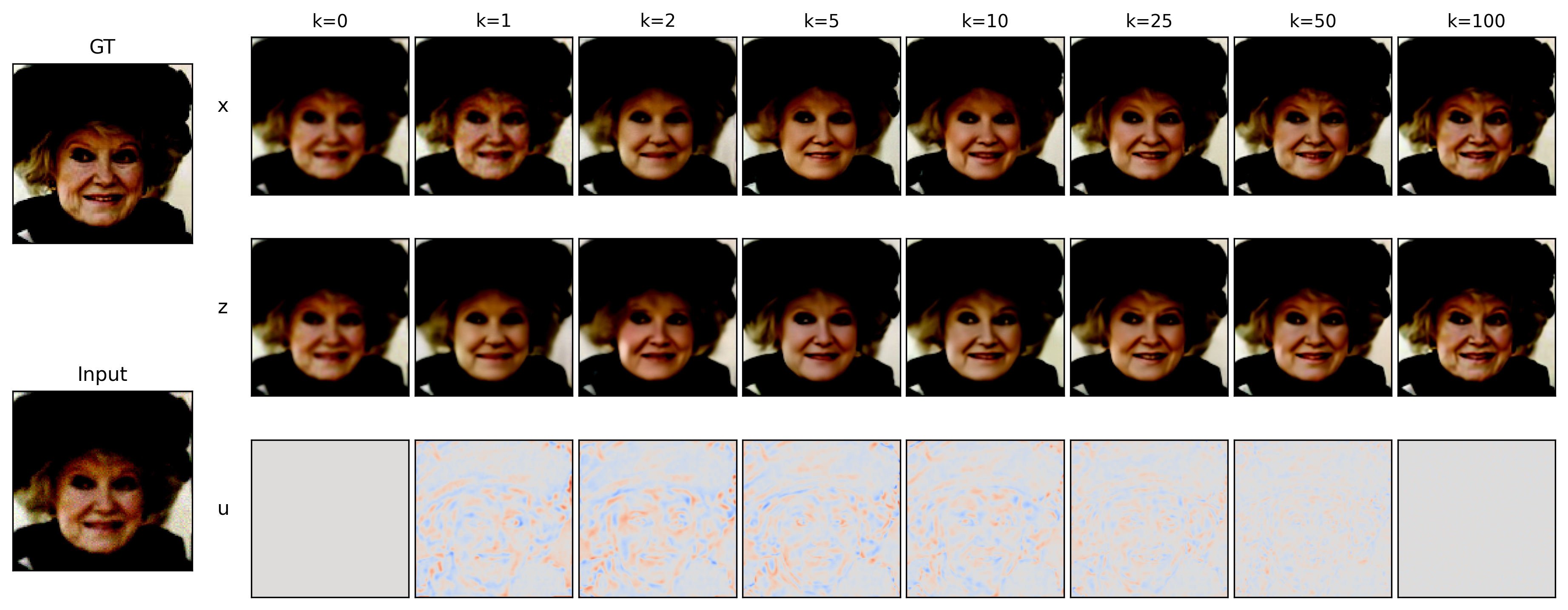}
    \caption{Trajectory visualization of a deblurring experiment}
    \label{fig:traj_deblurring}
\end{figure}

\begin{figure}[h]
    \centering
    \includegraphics[width=\linewidth]{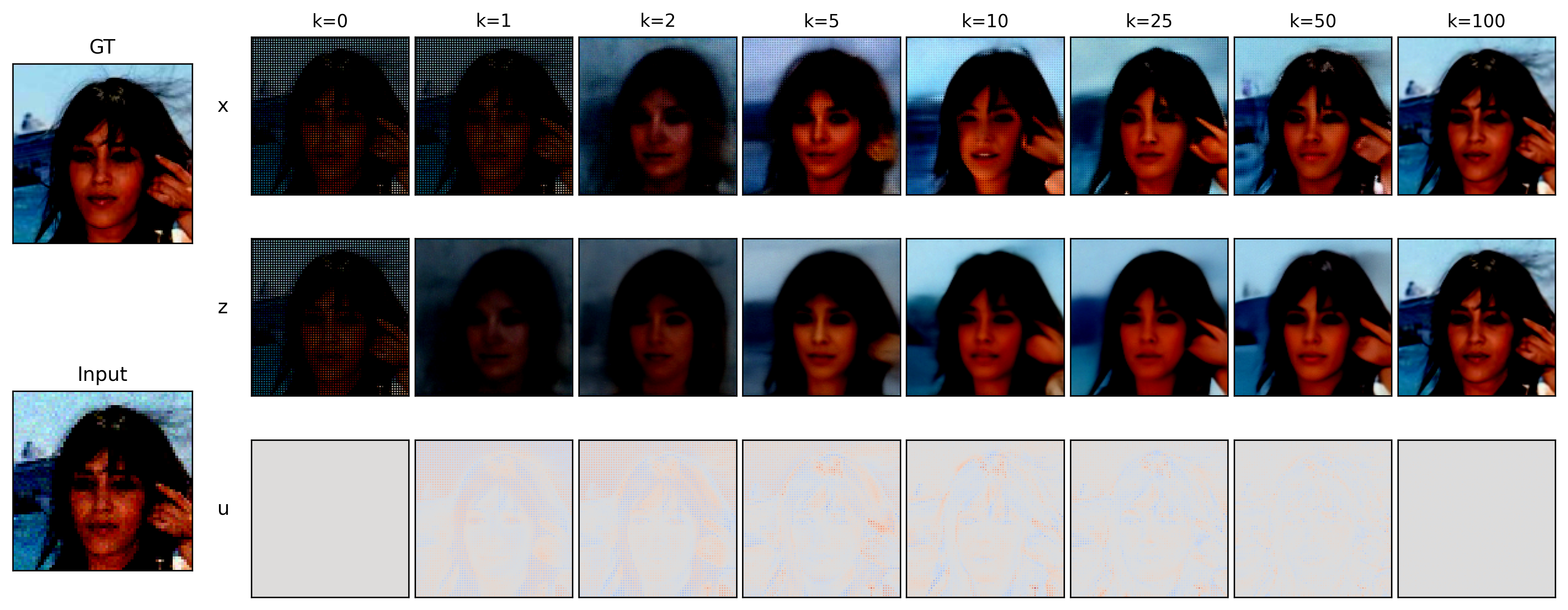}
    \caption{Trajectory visualization of a super-resolution experiment}
    \label{fig:traj_superres}
\end{figure}

\begin{figure}[h]
    \centering
    \includegraphics[width=\linewidth]{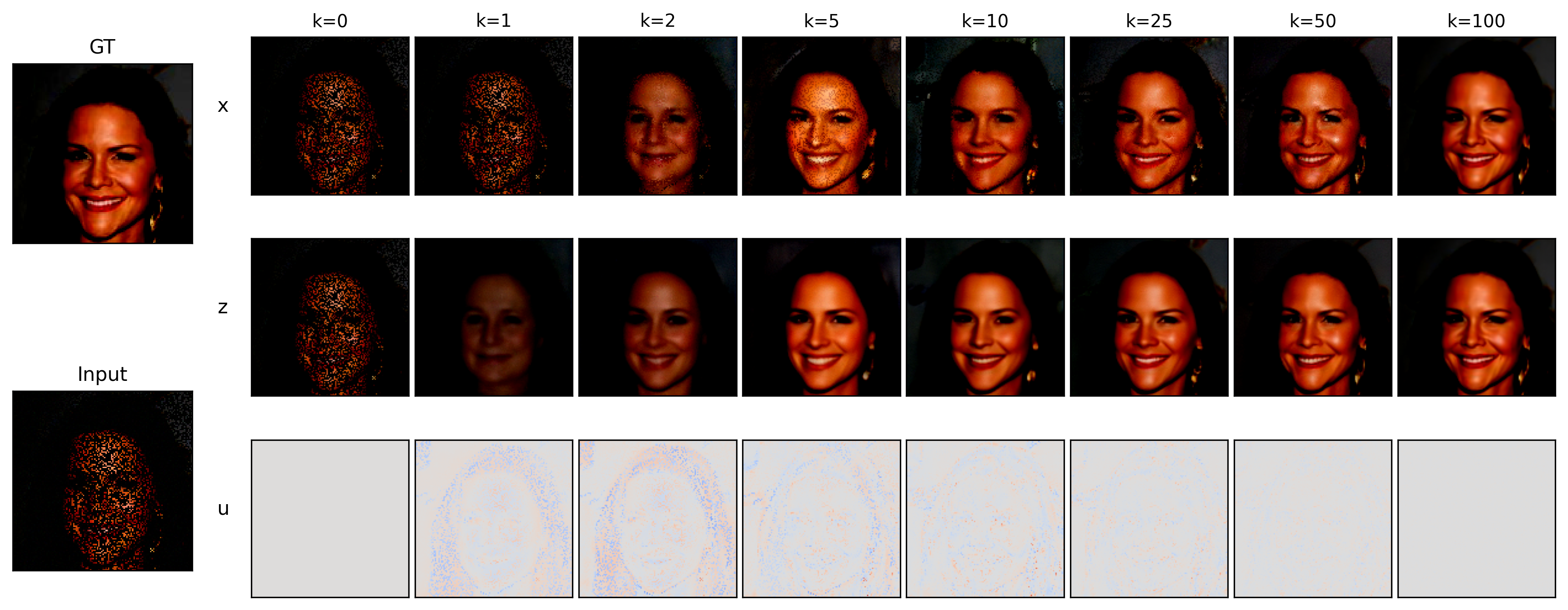}
    \caption{Trajectory visualization of a random inpainting experiment}
    \label{fig:traj_rand_inp}
\end{figure}

\begin{figure}[h]
    \centering
    \includegraphics[width=\linewidth]{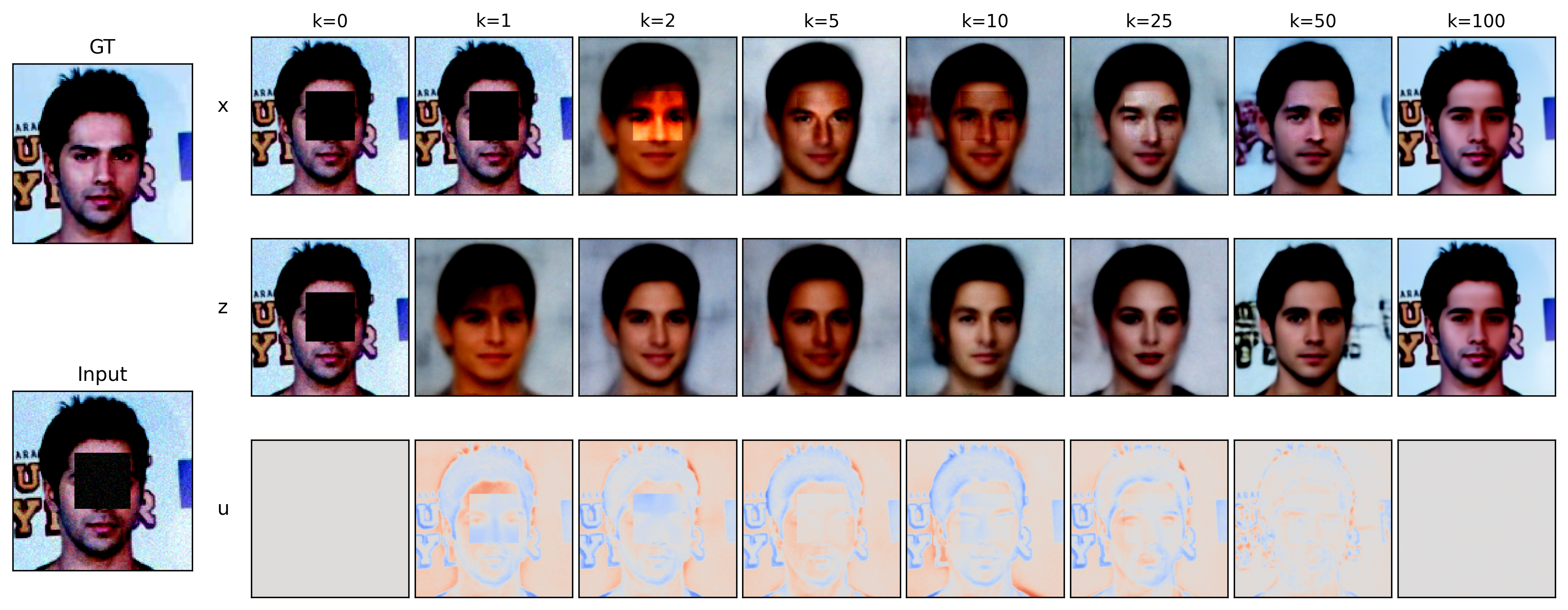}
    \caption{Trajectory visualization of a box inpainting experiment.}
    \label{fig:traj_box_inp}
\end{figure}

\subsection{Confidence intervals for improvements}
TO quanitify uncertainty in the reported test-set improvements over our main competitors PnP-Flow5 and Flower5-OT, we computed paired bootstrap confidence intervals over the per-image test PSNR.
For each task, we formed imagewise metric differences, over the same 100 test images, and then used bootstrapping to compute 95\% confidence intervals.
Several of the main gains of FlowADMM are clearly separated from zero, especially on our main improved tasks of super-resolution and random inpainting, which we report in Table~\ref{tab:bootstrap-ci}.
At the same time the intervals also confirm that the advantage is not uniform across all tasks: deblurring is the clearest exception, with FlowADMM underperforming Flower5-OT significantly.
Still, FlowADMM significantly outperforms or matches state-of-the art performance on most of the tasks.

\begin{table}[t]
  \centering
  \caption{Representative paired per-image bootstrap confidence intervals for
  PSNR differences. Positive values favor FlowADMM.}
  \label{tab:bootstrap-ci}
  \small
  \begin{tabular}{llc}
  \toprule
  Dataset & Task / Comparison & $\Delta$PSNR (95\% CI) \\
  \midrule
  CelebA & super-resolution vs. Flower5-OT & $+0.869\;[+0.818,+0.919]$ \\
  CelebA & random inpainting vs. Flower5-OT & $+1.172\;[+1.076,+1.270]$ \\
  AFHQ-Cat & super-resolution vs. Flower5-OT & $+1.138\;[+0.889,+1.411]$ \\
  AFHQ-Cat & deblurring vs. Flower5-OT & $-0.715\;[-0.959,-0.499]$ \\
  \bottomrule
  \end{tabular}
  \end{table}What 

\subsection{Additional tasks}    
We additionally consider the severely ill-posed problem of $8\times$ super-resolution on the CelebA dataset, which is not part of the standard benchmark used in prior work, which only evaluated $2\times$ super-resolution on that dataset.
We tuned FlowADMM and PnP-Flow on the validation set with the same procedure as usual. Flower does not expose hyperparameters for tuning, so we use it as is. As always, we keep the total number of flow iterations the same between methods.
The tuned hyperparameters for these experiments are shown in Table~\ref{tab:sr8-hparams}

The results are shown in Table~\ref{tab:sr8-stress}.
In this challenging setting, FlowADMM drastically outperforms the two competitors.

 \begin{table}[t]
  \centering
  \caption{Hyperparameters for the CelebA $8\times$ super-resolution test.}
  \label{tab:sr8-hparams}
  \small
  \setlength{\tabcolsep}{5pt}
  \begin{tabular}{ll}
  \toprule
  Method & Hyperparameters \\
  \midrule
  Flower5-OT
  & $(N,N_{\text{avg}})=(100,5)$ \\
  PnP-Flow5
  & $(N,N_{\text{avg}},\text{lr},\gamma\text{-style},\alpha)=(100,5,2.0,\text{const},0.001)$ \\
  FlowADMM
  & $(N,\tau,t_{\min},t_{\max},\gamma,N_k)$ \\
  & $=(100,0.1,0.2,0.95,1.0,\text{3ph}(1,3,35;0.6,0.9))$ \\
  \bottomrule
  \end{tabular}
  \end{table}

\begin{table}[t]
  \centering
  \caption{CelebA $8\times$ super-resolution on 100 test images. We compare against our main flow-based competitors}
  \label{tab:sr8-stress}
  \small
  \setlength{\tabcolsep}{5pt}
  \begin{tabular}{lccc}
  \toprule
  Method & PSNR $\uparrow$ & SSIM $\uparrow$ & LPIPS $\downarrow$ \\
  \midrule
  Flower5-OT & 14.21 & 0.374 & 0.383 \\
  PnP-Flow5 & 16.32 & 0.418 & 0.297 \\
  FlowADMM & \textbf{21.28} & \textbf{0.667} & \textbf{0.184} \\
  \bottomrule
  \end{tabular}
  \end{table}

\begin{figure}[h]
    \centering
    \includegraphics[width=\textwidth]{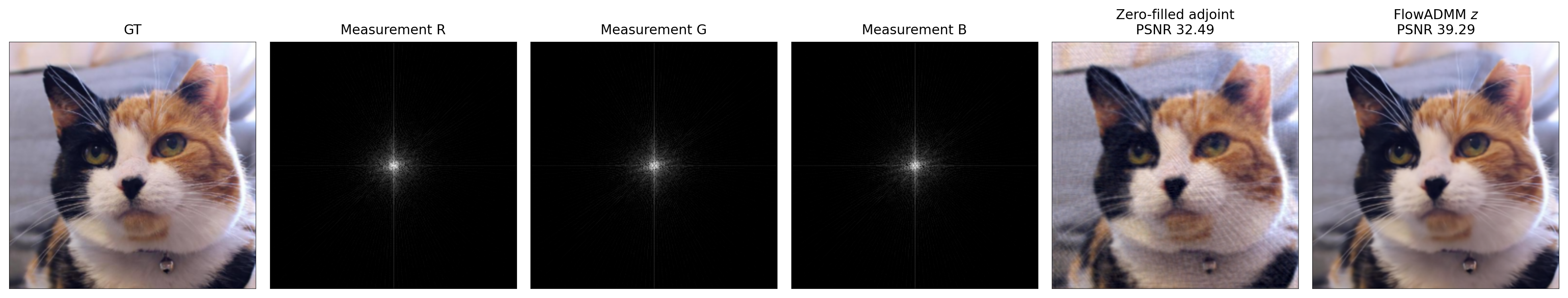}
    \caption{Results for CS MRI}
    \label{fig:cs-mri}
\end{figure}

\begin{figure}[h]
    \centering
    \includegraphics[width=\textwidth]{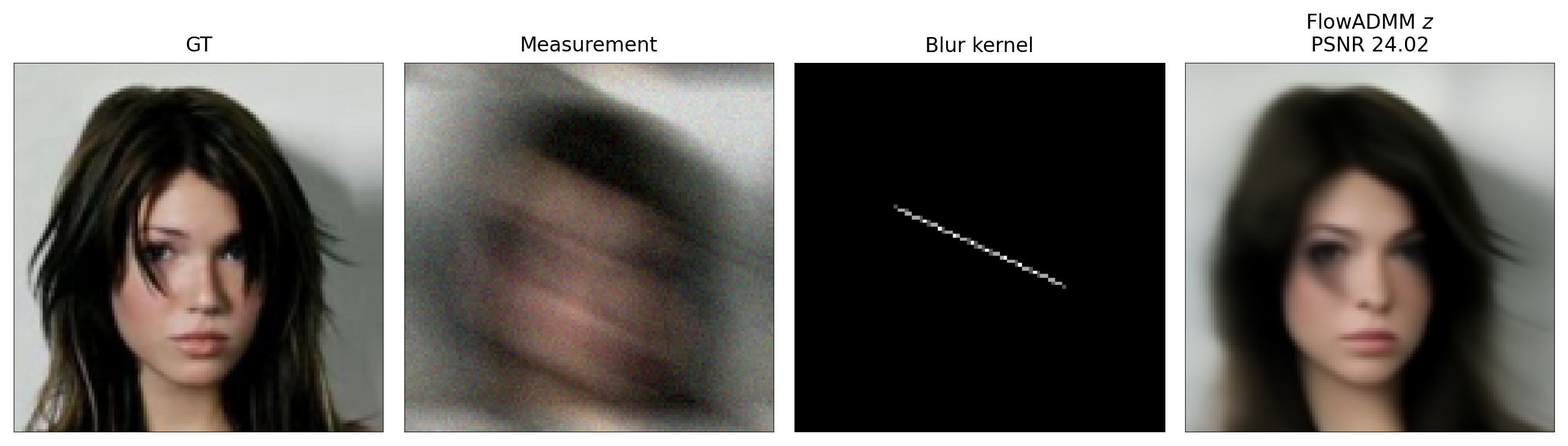}
    \caption{Results for motion deblurring}
    \label{fig:motion-deblur}
\end{figure}

We also tested the generality of our method on radial mask compressed sensing MRI (similar to the supplementary material of Flower) and motion deblurring. We did not tune hyperparameters for these tasks. The results can be found in Figures~\ref{fig:cs-mri} and \ref{fig:motion-deblur}.

%%%%%%%%%%%%%%%%%%%%%%%%%%%%%%%%%%%%%%%%%%%%%%%%%%%%%%%%%%%%

%\clearpage
%\input{checklist.tex}

\end{document}